\newtheorem{lemma}{Lemma}
\title{Probabilistic Transformer: A Probabilistic Dependency Model for Contextual Word Representation}
\author{Haoyi Wu \and Kewei Tu\thanks{\; Corresponding author.} \\
School of Information Science and Technology, ShanghaiTech University \\
Shanghai Engineering Research Center of Intelligent Vision and Imaging \\
\texttt{\{wuhy1, tukw\}@shanghaitech.edu.cn}}
\begin{document}
\maketitle
\begin{abstract}

  Syntactic structures used to play a vital role in natural language processing (NLP), but since the deep learning revolution, NLP has been gradually dominated by neural models that do not consider syntactic structures in their design. One vastly successful class of neural models is transformers. When used as an encoder, a transformer produces contextual representation of words in the input sentence. In this work, we propose a new model of contextual word representation, not from a neural perspective, but from a purely syntactic and probabilistic perspective. Specifically, we design a conditional random field that models discrete latent representations of all words in a sentence as well as dependency arcs between them; and we use mean field variational inference for approximate inference. Strikingly, we find that the computation graph of our model resembles transformers, with correspondences between dependencies and self-attention and between distributions over latent representations and contextual embeddings of words. Experiments show that our model performs competitively to transformers on small to medium sized datasets. We hope that our work could help bridge the gap between traditional syntactic and probabilistic approaches and cutting-edge neural approaches to NLP, and inspire more linguistically-principled neural approaches in the future.\footnote{Our code is publicly available at \url{https://github.com/whyNLP/Probabilistic-Transformer}}

\end{abstract}

\section{Introduction}

Once upon a time, syntactic structures were deemed essential in natural language processing (NLP). Modeling and inference about syntactic structures was an indispensable component in many NLP systems. That has all changed since the deep learning revolution started a decade ago. Modern NLP predominantly employs various neural models, most of which do not consider syntactic structures in their design.

One type of neural models that are particularly successful is transformers \cite{vaswani2017attention}. Given an input text, a transformer produces a vector representation for each word that captures the meaning as well as other properties of the word in its context. Such contextual word representations can then be served into downstream neural networks for solving various NLP tasks. The power of transformers in producing high-quality contextual word representations is further unleashed with large-scale pretraining \cite{devlin2019bert, liu2020roberta}. Nowadays, a vast majority of NLP models and systems are built on top of contextual word representations produced by some variants of pretrained transformers.

Like most other neural models, transformers were developed based on human insight and trial and error, without explicit design for incorporating syntactic structures. Nevertheless, there is evidence that contextual word representations produced by pretrained transformers encode certain syntactic structures \cite{hewitt-manning-2019-structural, tenney2018what} and attention heads in pretrained transformers may reflect syntactic dependencies \cite{clark-etal-2019-bert, htut2019attention, ravishankar-etal-2021-attention}. Because of the heuristic nature of the transformer model design, exactly how transformers acquire such syntactic capability remains unclear.

In this paper, we propose \emph{probabilistic transformers}, a very different approach to deriving contextual word representations that is based on classic non-neural probabilistic modeling with innate syntactic components. Specifically, we design a conditional random field that models discrete latent representations of all words as well as a syntactic dependency structure of the input sentence, and we define a potential function which evaluates the compatibility of the latent representations of any pair of words connected by a dependency arc. We use mean field variational inference for approximate inference, producing a marginal distribution for each latent word representation, the probability vector of which can then be used as a contextual vector representation of the word. 

While we propose our model from a purely syntactic and probabilistic perspective that is unrelated to transformers, we show that there is a striking resemblance between the computation graph of the inference procedure of our model and that of a transformer, with our intermediate distributions over dependency heads corresponding to self-attention scores and our intermediate distributions over latent word representations corresponding to intermediate word embeddings in a transformer. In short, we start with a probabilistic syntactic model but reach the transformer!
We empirically compare our model with transformers when trained with either masked language modeling or downstream tasks. Our experimental results show that our model performs competitively to transformers on small to medium sized datasets.

We hope that probabilistic transformers, instead of being a replacement of transformers, could benefit the analysis of the syntactic capability of transformers and at the same time inspire novel extensions of transformers. Furthermore, we hope our work would promote future research of neural models that are linguistically more principled, theoretically more well-founded, and empirically no less powerful than existing models.

\section{Probabilistic Transformers}

We will first introduce the basic model, a conditional random field (CRF) as illustrated in Figure~\ref{fig:factor-graph}, then show the inference procedure, and finally introduce some variants to the basic model.

\subsection{The CRF Model}
\label{sec:crf-model}

Given a sentence (a sequence of words), denote $n$ as the sequence length. For the $i$-th word, we define $Z_i$ as a discrete latent label that represents the syntactic (and possibly semantic) property of the word in the sentence (i.e., it is a contextual representation) with a label set of size $d$. Such a discrete representation deviates from the common practice of representing a word with a continuous vector, but it is sufficient at least for syntactic processing \cite{kitaev-etal-2022-learned} and it greatly simplifies our probabilistic model.
For the $i$-th word, we also define $H_i \in \{1, 2, \cdots, n\}$ representing the syntactic dependency head of the word. So the set of variables $\{H_i\}_{i=1}^n$ specifies a dependency structure. We may also allow $H_i$ to point to a dummy root node, which will be discussed in Section~\ref{sec:var-rn}. We follow the head-selection paradigm of dependency parsing and do not enforce the tree constraint, which again simplifies our model design.

\begin{figure}
  \centering
  \includegraphics[page=2,width=.5\textwidth,trim=80 200 300 90,clip]{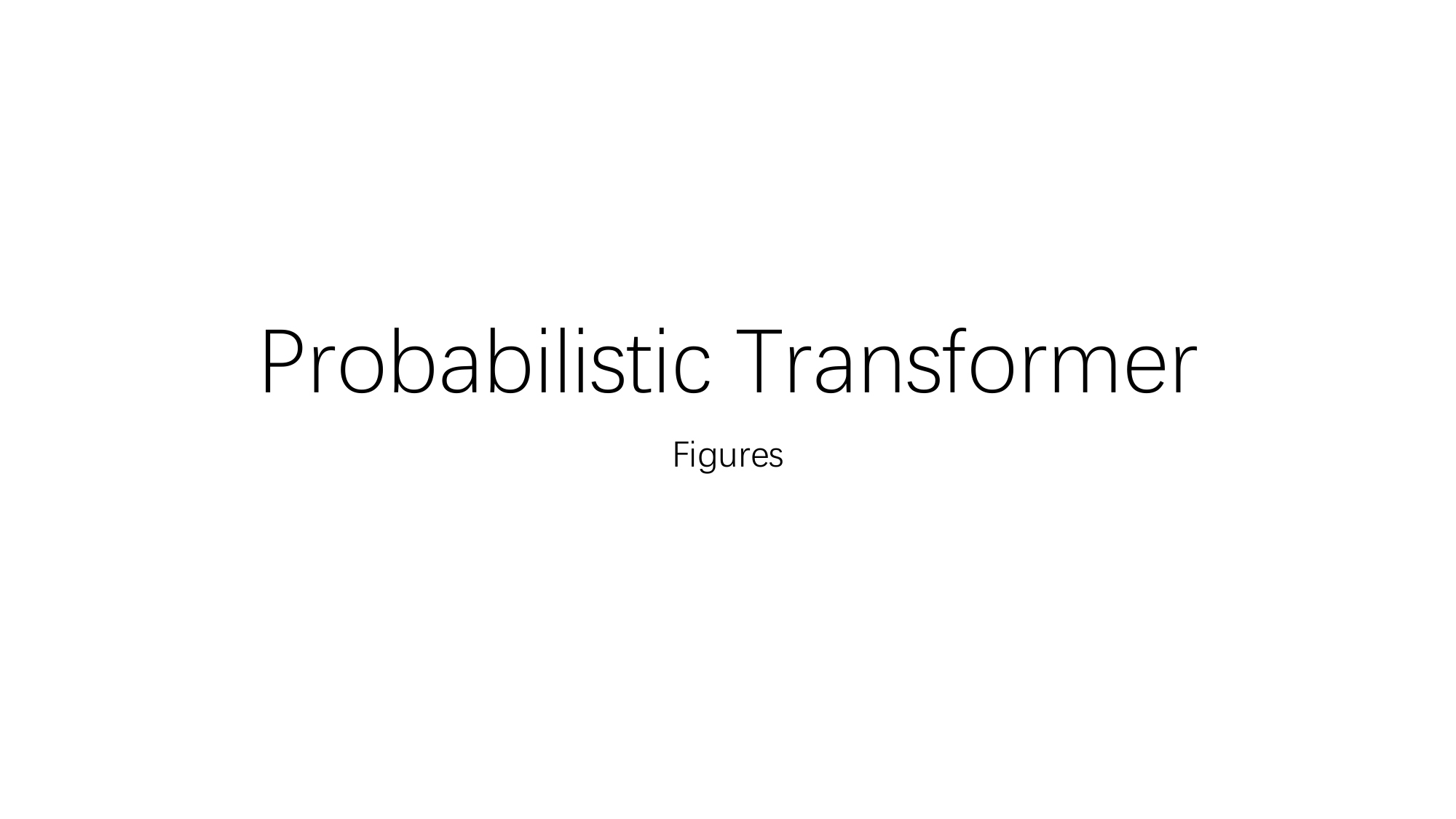}
  \caption{The factor graph for our CRF model with $n=3$. For clarity, ternary factors that connect to $H_i^{(c)}$ with $c>1$ are not shown in the figure.}
  \label{fig:factor-graph}
\end{figure}

Next, we define two types of potential functions. For the $i$-th word $w_i$, we define a unary potential function (corresponding to the unary factors in Figure~\ref{fig:factor-graph}) evaluating the compatibility of the word and its label $Z_i$:
\begin{equation}
  \phi_u(Z_i) = \exp\left(\mathbf{S}_{w_i, Z_i}\right) 
\end{equation}
where $\mathbf{S} \in \mathbb{R}^{|\mathcal{V}| \times d}$ is a score matrix, $|\mathcal{V}|$ is the size of the vocabulary. For simplicity, we do not exploit any morphological or contextual features for computing the scores.
For every pair of words $w_i$ and $w_j$ ($i \neq j$), we define a ternary potential function (corresponding to the ternary factors in Figure~\ref{fig:factor-graph}) over $Z_i$, $Z_j$ and $H_i$, which evaluates the compatibility between the labels of the two words if $w_j$ is the dependency head of $w_i$:
\begin{equation} \label{eq:ter-potential}
  \begin{aligned}
  \phi_t(&H_i, Z_i, Z_j) =\\
  &\left\{
    \begin{array}{lcl}
      \exp\left(\mathbf{T}_{Z_i,Z_j}\right)       &      & {H_i=j}\\
      1      &      & {\text{otherwise}}
    \end{array} 
  \right.
  \end{aligned}
\end{equation}
where $\mathbf{T} \in \mathbb{R}^{d \times d}$ is a score matrix.

Inspired by the multi-head structure in transformers, we allow multiple dependency structures for the same sentence, which may represent different flavors of dependencies. Each dependency structure resides in a different \emph{channel} with its own dependency head variables and ternary potential functions. For the $c$-th channel, we denote the set of dependency head variables by $\{H_i^{(c)}\}_{i=1}^n$ and the score matrix of the ternary potential function by $\mathbf{T}^{(c)}$. Let $h$ denote the total number of channels. We may stack all the score matrices $\mathbf{T}^{(c)}$ for $c=1,\cdots,h$ to form a score tensor $\mathbf{T} \in \mathbb{R}^{d \times d \times h}$. Note that all the channels share the same set of latent label variables $\{Z_i\}_{i=1}^n$.



\subsection{Inference}
\label{sec:inference}

Following \citet{wang-tu-2020-second}, we use Mean Field Variational Inference (MFVI) to perform approximate inference. Different from the previous work, however, we need to run inference over latent labels in addition to dependency heads.


MFVI iteratively passes messages between random variables and computes an approximate posterior marginal distribution for each random variable (denoted by $Q(\cdot)$).
Let $\mathcal{F}_{ic}^{(t)}$ denote the message received by variable $H_i^{(c)}$ at time step $t$ from ternary factors, and $\mathcal{G}_{i}^{(t)}$ denote the message received by variable $Z_i$ at time step $t$ from ternary factors. We have
\begin{align}
  \label{eq:msg-f} &\mathcal{F}_{ic}^{(t)}(j) = \sum_a \sum_b \left( Q_i^{(t)}(a)Q_j^{(t)}(b)\mathbf{T}_{a,b}^{(c)} \right) \\
  \label{eq:msg-g} &\begin{aligned}
    \mathcal{G}_{i}^{(t)}(a) = \sum_c \sum_{j \neq i} &\sum_b \left( Q_{ic}^{(t)}(j)Q_j^{(t)}(b)\mathbf{T}_{a,b}^{(c)}\right.\\
    &\left. + Q_{jc}^{(t)}(i)Q_j^{(t)}(b)\mathbf{T}_{b,a}^{(c)} \right)
  \end{aligned}
\end{align}
where
\begin{align}
  \label{eq:msg-qz} Q_i^{(t)}(a) &\propto \exp\left(\mathbf{S}_{w_i, a} + \mathcal{G}_{i}^{(t-1)}(a)\right) \\
  \label{eq:msg-qh} Q_{ic}^{(t)}(j) &\propto \exp\left(\mathcal{F}_{ic}^{(t-1)}(j)\right)
\end{align}
are the approximate marginal distributions at time step $t$, with $Q_i^{(t)}(\cdot)$ over $Z_i$ and $Q_{ic}^{(t)}(\cdot)$ over $H_i^{(c)}$. We initialize these distributions by
\begin{align}
  \label{eq:init-qz} Q_i^{(0)}(a) &\propto \exp\left(\mathbf{S}_{w_i, a}\right) \\
  \label{eq:init-qh} Q_{ic}^{(0)}(j) &\propto 1
\end{align}

After a fixed number of $T>0$ iterations, we obtain the final posterior marginal distribution $Q_i^{(T)}(Z_i)$ for $i=1,\cdots,n$. Resulted from interactions with all the words of the sentence, the distribution $Q_i^{(T)}(Z_i)$ incorporates information of not only the $i$-th word, but also its context. Therefore, we can treat the probability vector of this distribution as a contextual vector representation for the $i$-th word. In practice, we find that using unnormalized scores in log space as contextual word representations produces better results, i.e., we skip exponentiation and normalization when computing $Q_i^{(T)}(Z_i)$ using Equation~\ref{eq:msg-qz} during the final iteration.

Since all the computation during MFVI is fully differentiable, we can regard the corresponding computation graph as a recurrent or graph neural network parameterized with score matrix $\mathbf{S}$ and tensor $\mathbf{T}$.
We can use the contextual word representations for downstream tasks by connecting the network to any downstream task-specific network, and we can update the model parameters using any task-specific learning objective through gradient descent. This is exactly the same as how transformers are used.

\subsection{Extensions and Variants}
\label{sec:variants}

We introduce a few extensions and variants to the basic model that are empirically beneficial. Additional variants are discussed in Appendix~\ref{apx:more-variants}.

\subsubsection{Distance}
\label{sec:distance}

Similar to the case of transformers, our probabilistic model is insensitive to the word order of the input sentence. In order to capture the order information, we apply relative positional encoding to our model by using distance-sensitive ternary potential functions. Specifically, we use different ternary scores for different distances between words denoted by the two $Z$ variables of the potential function. The ternary potential function in Equation~\ref{eq:ter-potential} becomes:
\begin{equation} \label{eq:distance}
  \begin{aligned}
  &\phi_t(H_i^{(c)}, Z_i, Z_j) =\\
  &\left\{
    \begin{array}{lcl}
      \exp\left(\mathbf{T}[f(i-j)]_{Z_i,Z_j}^{(c)}\right)       &      & {H_i^{(c)}=j}\\
      1      &      & {\text{otherwise}}
    \end{array} 
  \right.
  \end{aligned}
\end{equation}
where $f$ is a clip function with threshold $\gamma$:
\begin{equation} \label{eq:clip}
  f(x) = \left\{
    \begin{array}{lcl}
      0 & & {x < -\gamma}\\
      x + \gamma + 1 & & {-\gamma \leq x < 0}\\
      x + \gamma     & & {0 < x \leq \gamma}\\
      2\gamma + 1      &      & {x > \gamma}
    \end{array} 
  \right.
\end{equation}
Notice that $x$ cannot be zero since the head of a word cannot be itself. We set $\gamma = 3$ by default.

\subsubsection{Asynchronous Update}
\label{sec:var-au}

During inference of the basic model, we iteratively update all variables in a synchronous manner. This can be problematic. Consider the first iteration. The messages passed to $Z$ variables from $H$ variables do not contain meaningful information because the initial distributions over $H$ are uniform. Consequently, after one iteration, distributions over all $Z$ variables become almost identical.

To fix this problem, we use the asynchronous update strategy by default in this work. For each iteration, we first update distributions over $H$ variables, and then update distributions over $Z$ variables based on the updated distributions over $H$ variables. Formally, we rewrite Formula~\ref{eq:msg-qh} as
\begin{equation*}
  Q_{ic}^{(t)}(j) \propto \exp\left(\mathcal{F}_{ic}^{(t)}(j)\right)
\end{equation*}
and eliminate Formula~\ref{eq:init-qh} because distributions over $H$ variables no longer need initialization.

\subsubsection{Message Weight}
\label{sec:var-rw}

During inference, $H$ variables have much fewer message sources than $Z$ variables. This often pushes $H$ variables towards being uniformly distributed. To balance the magnitude of the messages, we follow the Entropic Frank-Wolfe algorithm \cite{le2021regularized}, a generalization of MFVI, and introduce weight $\lambda_Z > 0$ and $\lambda_H > 0$ to Equation~\ref{eq:msg-qz} and \ref{eq:msg-qh}:
\begin{align}
  \label{eq:reg-weight-z} Q_i^{(t)}(a) &\propto \exp\left(\frac{1}{\lambda_Z}\left(\mathbf{S}_{w_i,a} + \mathcal{G}_{i}^{(t-1)}(a)\right)\right) \\
  \label{eq:reg-weight-h} Q_{ic}^{(t)}(j) &\propto \exp\left(\frac{1}{\lambda_H}\mathcal{F}_{ic}^{(t-1)}(j)\right)
\end{align}
We set $\lambda_Z = 1$ and $\lambda_H = \frac{1}{d}$ by default\footnote{We choose these weights in a similar way to choosing the scaling factor in scaled dot-product attention of transformers. See more details in Appendix~\ref{apx:choice}.}.


\subsubsection{Tensor Decomposition}
\label{sec:var-td}

Ternary score $\mathbf{T}$ is a tensor of shape $d \times d \times h$. Since $d$ is usually set to several hundred, such a tensor leads to a huge number of parameters. To reduce the number of parameters, we apply the Kruskal form (which is closely related to tensor rank decomposition) to build the ternary score from smaller tensors.
\begin{equation} \label{eq:td-uvw}
  \mathbf{T}_{a,b}^{(c)} = \sum_{l=1}^r \mathbf{U}_{a,l} \cdot \mathbf{V}_{b,l} \cdot \mathbf{W}_{c,l}
\end{equation}
where $\mathbf{U}, \mathbf{V} \in \mathbb{R}^{d \times r}$ and $\mathbf{W} \in \mathbb{R}^{h \times r}$.

Since the number of channels $h$ is relatively small, we may also choose only to decompose the first two dimensions.
\begin{equation} \label{eq:td-uv}
  \mathbf{T}_{a,b}^{(c)} = \sum_{l=1}^r \mathbf{U}_{a,c,l} \cdot \mathbf{V}_{b,c,l}
\end{equation}
where $\mathbf{U}, \mathbf{V} \in \mathbb{R}^{d \times h \times r}$.

\subsubsection{Root Node}
\label{sec:var-rn}

Dependency parsing assumes a dummy root node, which we may add to the CRF model. The root node is not associated with any word and instead can be seen as representing the entire sentence. Therefore, we assume that it has a different (and possibly larger) label set from words and hence requires a different ternary potential function. Specifically, we define $Z_{ROOT}$ as a discrete latent label of the root node with a label set of size $d_{root}$. For $i \in \{1,2,\cdots,n\}, c \in \{1,2,\cdots,h\}$, we add a ternary potential function over $Z_i, H_i^{(c)}$ and $Z_{ROOT}$: 
\begin{equation*}
  \begin{aligned}
  \phi_t(H_i^{(c)} &, Z_i, Z_{ROOT}) =\\
  &\left\{
    \begin{array}{lcl}
      \exp\left(\mathbf{T}_{Z_i, Z_{ROOT}}^{'(c)}\right)       &      & {H_i^{(c)}=ROOT}\\
      1      &      & {\text{otherwise}}
    \end{array} 
  \right.
  \end{aligned}
\end{equation*}
where $\mathbf{T}^{'} \in \mathbb{R}^{d \times d_{root} \times h}$ is the root score tensor. During inference, we initialize $Q^{(0)}(Z_{ROOT})$ with a uniform distribution. After inference, we can regard the posterior marginal distribution of $Z_{ROOT}$ as a sentence representation.

\section{Comparison with Transformers}
\label{sec:cp-to-trans}

Although our probabilistic transformers are derived as a probabilistic model of dependency structures over latent word labels, we find that its computational process has lots of similarities to that of transformers. Below, we first re-formulate a probabilistic transformer in a tensor form to facilitate its comparison with a transformer, and then discuss the similarities between the two models at three levels.

\subsection{Probabilistic Transformers in Tensor Form}


Consider a probabilistic transformer using a distance-insensitive ternary potential function without a dummy root node. We tensorize the update formulas in the inference process of probabilistic transformers. Suppose $Q_z^{(t)} \in \mathbb{R}^{n \times d}$ is a tensor that represents the posterior distributions of all the $Z$ variables, and $Q_{h,c}^{(t)} \in \mathbb{R}^{n \times n}$ is a tensor that represents the posterior distributions of all the $H$ variables in channel $c$ (with a zero diagonal to rule out self-heading). We can rewrite Equation~\ref{eq:msg-f} and \ref{eq:msg-g} as
\begin{align}
  \label{eq:msg-tensor-f} \mathcal{F}_{c}^{(t)} &= Q_z^{(t)} \mathbf{T}^{(c)} Q_z^{(t)T} \\
  \label{eq:msg-tensor-g-original} \mathcal{G}^{(t)} &= \sum_c \left( Q_{h,c}^{(t)} Q_z^{(t)} \mathbf{T}^{(c)T}+ Q_{h,c}^{(t)T} Q_z^{(t)} \mathbf{T}^{(c)} \right)
\end{align}
where
\begin{align}
  \label{eq:msg-tensor-qz} Q_z^{(t)} &= \sigma(\mathbf{S} + \mathcal{G}^{(t-1)}) \\
  \label{eq:msg-tensor-qh-novar} Q_{h,c}^{(t)} &= \sigma(\mathcal{F}_{c}^{(t-1)})
\end{align}
and $\sigma$ is the softmax function. We still set $\lambda_Z$ to its default value $1$ but regard $\lambda_H$ as a hyperparameter. With asynchronous update, Equation~\ref{eq:msg-tensor-qh-novar} becomes:
\begin{equation} \label{eq:msg-tensor-qh}
  Q_{h,c}^{(t)} = \sigma\left(\frac{\mathcal{F}_{c}^{(t)}}{\lambda_H}\right)
\end{equation}


We assume that $\mathbf{T}^{(c)}$ is symmetric for $c=1,\cdots,h$. This is the only assumption that we make in this section beyond the original definition from the previous section. Symmetric score matrices indicate that the ternary factors are insensitive to the head-child order, which is related to undirected dependency parsing \cite{sleator-temperley-1993-parsing}. If $\mathbf{T}^{(c)}$ is symmetric, then $Q_{h,c}^{(t)}$ is also symmetric based on Formula~\ref{eq:msg-tensor-f} and \ref{eq:msg-tensor-qh}. Thus, we can simplify Equation~\ref{eq:msg-tensor-g-original} to
\begin{equation} \label{eq:msg-tensor-g}
  \mathcal{G}^{(t)} = 2 \sum_c Q_{h,c}^{(t)} Q_z^{(t)} \mathbf{T}^{(c)T}
\end{equation}

Suppose we decompose the ternary score tensor into two tensors $\mathbf{U}, \mathbf{V} \in \mathbb{R}^{d \times h \times r}$ according to Equation~\ref{eq:td-uv}, which can be rewritten as:
\begin{equation} \label{eq:score-td}
  \mathbf{T}^{(c)} = \mathbf{U}^{(c)} \mathbf{V}^{(c)T}
\end{equation}
where $\mathbf{U}^{(c)}, \mathbf{V}^{(c)} \in \mathbb{R}^{d \times r}$ are the $c$-th channel of tensor $\mathbf{U}$ and $\mathbf{V}$ respectively. Substitute \ref{eq:score-td} into \ref{eq:msg-tensor-f} and \ref{eq:msg-tensor-g}, we have
\begin{align}
  \label{eq:msg-tensor-uv-f} \mathcal{F}_{c}^{(t)} &= Q_z^{(t)} \mathbf{U}^{(c)} \mathbf{V}^{(c)T} Q_z^{(t)T} \\
  \label{eq:msg-tensor-uv-g} \mathcal{G}^{(t)} &= 2 \sum_c Q_{h,c}^{(t)} Q_z^{(t)} \mathbf{V}^{(c)} \mathbf{U}^{(c)T}
\end{align}


We define
\begin{align}
  Q_c &= Q_z^{(t-1)} \mathbf{U}^{(c)} \\
  K_c = V_c &= Q_z^{(t-1)} \mathbf{V}^{(c)}
\end{align}
For time step $t-1$, we could rewrite Formula~\ref{eq:msg-tensor-uv-f} and \ref{eq:msg-tensor-uv-g} as
\begin{align}
  \label{eq:msg-tensor-f-in-qkv} \mathcal{F}_{c}^{(t-1)} &= Q_c K_c^T \\
  \label{eq:msg-tensor-g-in-qkv} \mathcal{G}^{(t-1)} &= 2 \sum_c Q_{h,c}^{(t-1)} V_c \mathbf{U}^{(c)T}
\end{align}

Apply Equation~\ref{eq:msg-tensor-g-in-qkv}, \ref{eq:msg-tensor-qh}, \ref{eq:msg-tensor-f-in-qkv} to \ref{eq:msg-tensor-qz}, we have
\begin{equation} \label{eq:final-update}
  Q_z^{(t)} = \sigma(\mathbf{S} + 2\sum_c \operatorname{channel}_c \mathbf{U}^{(c)T})
\end{equation}
where 
\begin{equation} \label{eq:channel-c}
  \operatorname{channel}_c = \sigma\left(\frac{Q_c K_c^T}{\lambda_H}\right) V_c
\end{equation}
We call the computation of $\operatorname{channel}_c$ a \emph{single-channel update} for channel $c$.

Now we have a tensorized formulation of the computation in probabilistic transformers and we are ready for its comparison with transformers at three different levels.


\subsection{Single-Channel Update vs. Scaled Dot-Product Attention}
\label{sec:single-channel}

Scaled dot-product attention in transformers is formulated as:
\begin{equation*}
  \operatorname{Attention}(Q, K, V)=\sigma\left(\frac{Q K^{T}}{\sqrt{d_{k}}}\right) V
\end{equation*}
As we can see, our single-channel update in Equation~\ref{eq:channel-c} is almost identical to scaled dot-product attention in transformers. The only difference is that the diagonal of the tensor $Q_c K_c^T$ is zero in our model because the head of a word cannot be itself. 

\subsection{Multi-Channel Update vs. Multi-Head Attention}
\label{sec:cmp-multi-head}

\begin{figure*}[tb]
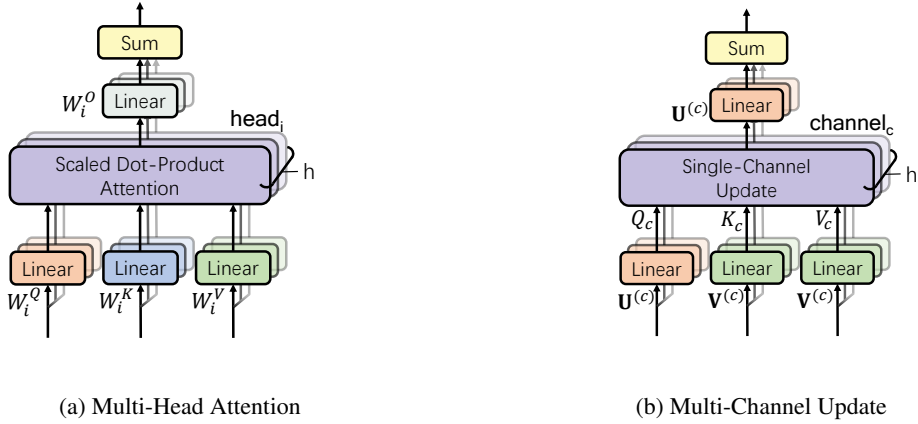

  \centering
  \begin{subfigure}{.4\textwidth}
    \centering
    \includegraphics[page=3,width=\textwidth,trim=60 20 510 200,clip]{Figures.pdf}
    \caption{Multi-Head Attention}
  \end{subfigure}
  \quad \quad \quad
  \begin{subfigure}{.4\textwidth}
    \centering
    \includegraphics[page=4,width=\textwidth,trim=500 20 60 200,clip]{Figures.pdf}
    \caption{Multi-Channel Update}
    \label{subfig:multi-channel}
  \end{subfigure}
  \caption{Computation graphs for multi-head attention in transformers and multi-channel update in probabilistic transformers. See an explanation of replacing concat+linear with linear+sum 
  in the upper part of multi-head attention in Section~\ref{sec:cmp-multi-head}.}
  \label{fig:cmp-multi-channel}
\end{figure*}

Multi-head attention in transformers is formulated as:
\begin{equation*}
  \begin{aligned}
    &\operatorname{MultiHead}(Q, K, V)= \\
    &\operatorname{Concat}\left(\operatorname{head}_{1}, \ldots, \operatorname{head}_{h}\right) W^{O}
  \end{aligned}
\end{equation*}
where 
\begin{equation*}
  \operatorname{head}_{i}=\operatorname{Attention}\left(Q W_{i}^{Q}, K W_{i}^{K}, V W_{i}^{V}\right)
\end{equation*}
It is equivalent to
\begin{equation*}
  \operatorname{MultiHead}(Q, K, V)=\sum_i \operatorname{head}_{i} (W_i^{O})^{T}
\end{equation*}
where $W^{O} \equiv \operatorname{Concat}(W_{1}^{O}, \ldots, W_{h}^{O})$ and $W_i^{Q}, W_i^{K}, W_i^{V}, W_i^{O} \in \mathbb{R}^{d \times r}$. Our multi-channel update formula (the second term within the softmax function in Equation~\ref{eq:final-update}) is similar to the multi-head attention in transformers, as shown in Figure~\ref{fig:cmp-multi-channel}. The main difference is that probabilistic transformers use the same parameters for $W^K$ and $W^V$ (both are $\mathbf{V}$, shown in green color in Figure~\ref{subfig:multi-channel}) and for $W^Q$ and $W^O$ (both are $\mathbf{U}$, shown in orange color in Figure~\ref{subfig:multi-channel}). 

Recall that $\mathbf{U}$ and $\mathbf{V}$ are obtained from matrix decomposition (Equation~\ref{eq:td-uv}). Therefore, the correspondence between $\mathbf{U}$, $\mathbf{V}$ and $W^{Q}, W^{K}, W^{O}, W^{V}$ in transformers suggests that the latter can also be seen as derived from tensor decomposition. Previous work on transformers has the same findings \cite{elhage2021mathematical}.


\subsection{Full Model Comparison}
\label{sec:cmp-whole-block}

Figure~\ref{fig:cmp-whole-block} compares the full computation graphs of the two models, which have a similar overall structure that repeats a module recurrently until outputting contextual word representations. Within the module, we have also established the correspondence between multi-channel update and multi-head attention. On the other hand, there are a few interesting differences.

First, our model does not have a feed-forward structure as in a transformer. However, we do propose a variant of our model that contains global variables representing topics (Appendix~\ref{apx:global-vars}), which may have similar functionality to the feed-forward structure.

Second, our model does not have residual connections or layer norms. Instead, it adds the initial distributions (unary scores) to the updated message at each iteration. This may replace the functionality of residual connections and may even make more sense when the downstream task strongly depends on the original word information.

Third, we have an additional softmax in each iteration. Note that we do softmax before the first iteration (Equation~\ref{eq:init-qz}) and also at the end of each iteration (Equation~\ref{eq:final-update}), but bypass it in the last iteration when producing the output word representations, so our model could be equivalently formulated as doing softmax before each iteration, which we show in Figure~\ref{subfig:whole-block}. Doing softmax in this way is similar to the layer norm in pre-LN transformers \cite{xiong2020layer} (Figure~\ref{subfig:pre-ln-tsfm}).

Finally, our model shares parameters in all iterations. This is similar to some variants of transformers that share parameters between layers, such as Universal Transformer \cite{dehghani2018universal} and ALBERT \cite{lan2019albert}.

One consequence of these differences is that probabilistic transformers have much fewer parameters than transformers with the same number of layers, heads and embedding dimensions, because of shared parameters between iterations, absence of a feed-forward structure, and tied parameter matrices in multi-channel updates.





\begin{figure*}[tb]
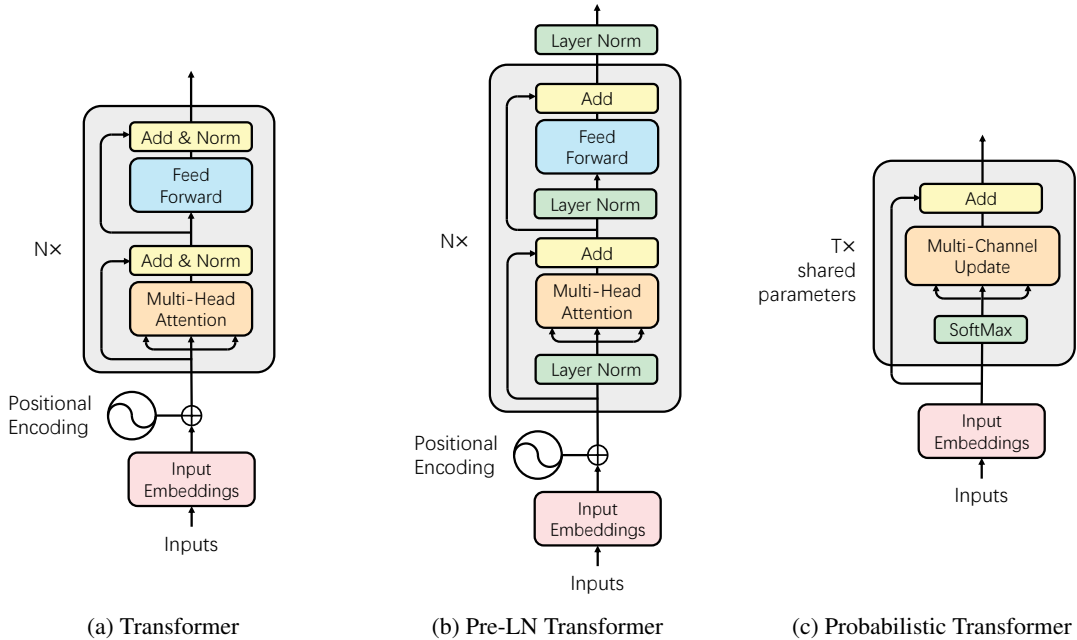

  \centering
  \begin{subfigure}{.286\textwidth}
    \centering
    \includegraphics[page=5,width=\textwidth,trim=30 0 620 0,clip]{Figures.pdf}
    \caption{Transformer}
  \end{subfigure}
  \quad 
  \begin{subfigure}{.286\textwidth}
    \centering
    \includegraphics[page=6,width=\textwidth,trim=320 0 330 0,clip]{Figures.pdf}
    \caption{Pre-LN Transformer}
    \label{subfig:pre-ln-tsfm}
  \end{subfigure}
  \quad 
  \begin{subfigure}{.286\textwidth}
    \centering
    \includegraphics[page=7,width=\textwidth,trim=630 0 20 0,clip]{Figures.pdf}
    \caption{Probabilistic Transformer}
    \label{subfig:whole-block}
  \end{subfigure}
  \caption{Computation graphs for transformers and probabilistic transformers.}
  \label{fig:cmp-whole-block}
\end{figure*}

\section{Experiments}

We empirically compare probabilistic transformers with transformers on three tasks: masked language modeling, sequence labeling, and text classification. For each task, we use two different datasets. We also perform a syntactic test to evaluate the compositional generalization ability of our model.

\subsection{Tasks and Datasets}
\label{sec:task-dataset}

Here we briefly introduce our tasks and datasets. A detailed description is presented in Appendix~\ref{apx:task-dataset}.

\textbf{Masked Language Modeling (MLM)}. We perform MLM tasks on two corpora: the Penn TreeBank (PTB) \cite{marcinkiewicz1994building} and Brown Laboratory for Linguistic Information Processing (BLLIP) \cite{charniak2000bllip}. Following \citet{shen-etal-2022-unsupervised}, we randomly replace words with a mask token \texttt{<mask>} at a rate of 30\%. The performance of MLM is evaluated by measuring perplexity (lower is better) on masked words.

We project the final word representation of each mask token to the vocabulary. For transformers, we tie the projection parameters to the initial word embeddings. We find that this trick improves the performance of transformers.

\textbf{Sequence Labeling}. For sequence labeling tasks, we perform part-of-speech (POS) tagging on two datasets: the Penn TreeBank (PTB) \cite{marcinkiewicz1994building} and the Universal Dependencies (UD) \cite{de2021universal}.  We also perform named entity recognition (NER) on CoNLL-2003 \cite{tjong-kim-sang-de-meulder-2003-introduction}.

We directly project the final word representation of each word to the target tag set. For POS tagging, we evaluate the results by the accuracy of word-level predictions. For NER, we evaluate the results by measuring the F1 score of named entities.

\textbf{Text Classification}. We use the Stanford Sentiment Treebank (SST) \cite{socher-etal-2013-recursive} as the dataset. It has two variants: binary classification (SST-2) and fine-grained classification (SST-5).

For transformers, we add a \texttt{<CLS>} token at the front of the sentence and then project its representation to the tag set. For our model, we use the variant with a root node introduced in Section~\ref{sec:var-rn} and project the representation of the root node to the tag set.

\textbf{Syntactic Test}. To evaluate the compositional generalization abilities of our model, we perform a syntactic test on the COGS dataset \cite{kim-linzen-2020-cogs}. We follow the settings in \citet{ontanon2021making}, who cast the task as a sequence labeling task.

As in sequence labeling, we project word representations to tag sets. If all words in a sentence are correctly predicted, the sentence prediction will be counted as correct. We evaluate the results by the sentence-level accuracy of the predictions.

\begin{table*}[tb]
  \small
  \centering
  \begin{tabular}{@{}ccccc@{}}
  \toprule
  Task                 & Dataset & Metric                      & Transformer       & Probabilistic Transformer \\ \midrule
  \multirow{2}{*}{MLM} & PTB     & \multirow{2}{*}{Perplexity} & $58.43 \pm 0.58$  & $62.86 \pm 0.40$          \\
                       & BLLIP   &                             & $101.91 \pm 1.40$ & $123.18 \pm 1.50$         \\ \midrule
  \multirow{2}{*}{POS} & PTB     & \multirow{2}{*}{Accuracy}   & $96.44 \pm 0.04$  & $96.29 \pm 0.03$          \\
                       & UD      &                             & $91.17 \pm 0.11$  & $90.96 \pm 0.10$          \\ \midrule
  NER                  & CoNLL-2003 & F1                          & $74.02 \pm 1.11$  & $75.47 \pm 0.35$          \\ \midrule
  \multirow{2}{*}{CLS} & SST-2   & \multirow{2}{*}{Accuracy}   & $82.51 \pm 0.26$  & $82.04 \pm 0.88$         \\
                       & SST-5   &                             & $40.13 \pm 1.09$  & $42.77 \pm 1.18$         \\ \midrule
  Syntactic Test       & COGS    & Sentence-level Accuracy     & $82.05 \pm 2.18$  & $84.60 \pm 2.06$          \\ \bottomrule
  \end{tabular}
  \caption{Main results of probabilistic transformers compared with transformers.}
  \label{tab:main-result}
\end{table*}

\subsection{Settings}

We tune transformers and our model separately for each task except the syntactic test. For the syntactic test, we find that both transformers and our model easily reach 100\% accuracy on the validation set. This observation is consistent with \citet{ontanon2021making}. Therefore, instead of tuning, we use the best-performed setting of transformers in \citet{ontanon2021making} for our experiments. The hyperparameters of our model are determined by their counterparts of transformers based on the correspondence discussed in Section~\ref{sec:cp-to-trans}.

For our model, we integrate all the variants mentioned in Section~\ref{sec:variants} except the root node variant, which we only use for text classification tasks. We tune the tensor decomposition strategy on different tasks. For MLM tasks, we add a small L2 regularization term to the ternary scores in our model, which we experimentally find beneficial. We optimize both models using the Adam optimizer \cite{kingma2014adamam} with $\beta_1 = 0.9$, $\beta_2 = 0.999$.

\subsection{Results}

We report the average and standard deviation results of 5 random runs in Table~\ref{tab:main-result}. It shows that our model has a competitive performance compared with transformers. In most tasks, probabilistic transformers perform competitively to transformers. It is worth noting that in these experiments, probabilistic transformers have much fewer parameters than transformers. For most tasks, the number of parameters of our best model is about one-fifth to one-half of that of the best transformer.

We also conduct case studies of the dependency structures inferred by our model after training on downstream tasks. Similar to the case of self-attentions in transformers, the inferred dependency structures are only partially consistent with human intuition. See Appendix~\ref{apx:case-study} for details.

\section{Related Work}






There have been several studies trying to incorporate syntactic structures to transformers. \citet{strubell-etal-2018-linguistically} force one attention head to attend to predicted syntactic governors of input tokens. \citet{wang-etal-2019-tree, ahmad2021gate} try to integrate constituency or dependency structures into transformers. \citet{shen-etal-2021-structformer} propose a dependency-constrained self-attention mechanism to induce dependency and constituency structures. 
Our work deviates from all these previous studies in that we start from scratch with probabilistic modeling of word representations and dependencies, but obtain a model that is strikingly similar to transformers.

\section{Discussion}
\label{sec:discussion}

It is worth noting that in this work, our primary goal is not to propose and promote a new model to compete with transformers. Instead, it is our hope that our work could benefit the analysis and extension of transformers, as well as inspire future research of transformer-style models that are linguistically more principled, theoretically more well-founded, and empirically no less powerful than existing models. 
In the long run, we aim to bridge the gap between traditional statistical NLP and modern neural NLP, so that valuable ideas, techniques and insights developed over the past three decades in statistical NLP could find their place in modern NLP research and engineering.

The datasets used in our experiments have small to medium sizes (around 10k to 60k training sentences). Our preliminary experiments with MLM on larger data show that our models significantly underperform transformers, which suggests that our model may not be as scalable as transformers. One possible cause is the absence of a feed-forward structure in our model. Recent researches show that the feed-forward layers might serve as an important part of transformers \cite{pmlr-v139-dong21a}. Further research is needed to analyze this problem.

Our model can be extended in a few directions. Instead of discrete labels, we may assume $Z$ variables representing discrete vectors or even continuous vectors, which may lead to more complicated inference. We may model dependency labels by pairing every $H$ variable with a dependency label variable. While we focus on contextual word representation (i.e., encoding) in this paper, we may extend our probabilistic model to include a decoder. Considering the similarity between our model and transformers, we speculate that some of these extensions may be used to inspire extensions of transformers as well.

\section{Conclusion}

We present probabilistic transformers, a type of syntactic-aware probabilistic models for contextual word representation. A probabilistic transformer acquires discrete latent representations of all words in the input sentence by modeling a syntactic dependency structure of the input sentence. We use MFVI for approximate inference and find a striking resemblance between the computation graph of the inference procedure of our model and that of a transformer. Our experimental results demonstrate that our model performs competitively to transformers on small to medium sized datasets.

\section*{Limitations}

Though we have found a tight connection between probabilistic transformers and transformers in Section~\ref{sec:cp-to-trans}, this does not mean that our model can be directly used to interpret or modify transformers. For instance, in Section~\ref{sec:cmp-multi-head}, we find that $W^K$ and $W^V$ in transformers both correspond to $\mathbf{U}$ in probabilistic transformers. However, if we tie $W^K$ and $W^V$ in transformers, then we may observe a performance drop on some downstream tasks.

The performance of probabilistic transformers lags behind transformers on large datasets (>100k), which suggests that our model may not be as scalable as transformers. We have discussed this in Section~\ref{sec:discussion}. 

The way of positional encoding for probabilistic transformers leads to slower training and inference speed. On masked language modeling tasks, our model is about 3 times slower than transformers with either absolute or relative positional encoding, though it has much fewer parameters than transformers.

\section*{Acknowledgements}

This work was supported by the National Natural Science Foundation of China (61976139).

\bibliography{anthology,custom}

\begin{thebibliography}{41}
\expandafter\ifx\csname natexlab\endcsname\relax\def\natexlab#1{#1}\fi

\bibitem[{Ahmad et~al.(2021)Ahmad, Peng, and Chang}]{ahmad2021gate}
Wasi~Uddin Ahmad, Nanyun Peng, and Kai-Wei Chang. 2021.
\newblock Gate: graph attention transformer encoder for cross-lingual relation
  and event extraction.
\newblock In \emph{Proceedings of the AAAI Conference on Artificial
  Intelligence}, volume~35, pages 12462--12470.

\bibitem[{Akbik et~al.(2019)Akbik, Bergmann, Blythe, Rasul, Schweter, and
  Vollgraf}]{akbik-etal-2019-flair}
Alan Akbik, Tanja Bergmann, Duncan Blythe, Kashif Rasul, Stefan Schweter, and
  Roland Vollgraf. 2019.
\newblock \href {https://doi.org/10.18653/v1/N19-4010} {{FLAIR}: An easy-to-use
  framework for state-of-the-art {NLP}}.
\newblock In \emph{Proceedings of the 2019 Conference of the North {A}merican
  Chapter of the Association for Computational Linguistics (Demonstrations)},
  pages 54--59, Minneapolis, Minnesota. Association for Computational
  Linguistics.

\bibitem[{Charniak et~al.(2000)Charniak, Blaheta, Ge, Hall, and
  Johnson}]{charniak2000bllip}
Eugene Charniak, Don Blaheta, Niyu Ge, Keith Hall, and Mark Johnson. 2000.
\newblock Bllip 1987--89 wsj corpus release 1, ldc no.
\newblock \emph{LDC2000T43. Linguistic Data Consortium}.

\bibitem[{Clark et~al.(2019)Clark, Khandelwal, Levy, and
  Manning}]{clark-etal-2019-bert}
Kevin Clark, Urvashi Khandelwal, Omer Levy, and Christopher~D. Manning. 2019.
\newblock \href {https://doi.org/10.18653/v1/W19-4828} {What does {BERT} look
  at? an analysis of {BERT}{'}s attention}.
\newblock In \emph{Proceedings of the 2019 ACL Workshop BlackboxNLP: Analyzing
  and Interpreting Neural Networks for NLP}, pages 276--286, Florence, Italy.
  Association for Computational Linguistics.

\bibitem[{Conneau and Kiela(2018)}]{conneau2018senteval}
Alexis Conneau and Douwe Kiela. 2018.
\newblock Senteval: An evaluation toolkit for universal sentence
  representations.
\newblock \emph{arXiv preprint arXiv:1803.05449}.

\bibitem[{De~Marneffe et~al.(2021)De~Marneffe, Manning, Nivre, and
  Zeman}]{de2021universal}
Marie-Catherine De~Marneffe, Christopher~D Manning, Joakim Nivre, and Daniel
  Zeman. 2021.
\newblock Universal dependencies.
\newblock \emph{Computational linguistics}, 47(2):255--308.

\bibitem[{Dehghani et~al.(2019)Dehghani, Gouws, Vinyals, Uszkoreit, and
  Kaiser}]{dehghani2018universal}
Mostafa Dehghani, Stephan Gouws, Oriol Vinyals, Jakob Uszkoreit, and Lukasz
  Kaiser. 2019.
\newblock \href {https://openreview.net/forum?id=HyzdRiR9Y7} {Universal
  transformers}.
\newblock In \emph{International Conference on Learning Representations}.

\bibitem[{Devlin et~al.(2019)Devlin, Chang, Lee, and
  Toutanova}]{devlin2019bert}
Jacob Devlin, Ming-Wei Chang, Kenton Lee, and Kristina Toutanova. 2019.
\newblock Bert: Pre-training of deep bidirectional transformers for language
  understanding.
\newblock In \emph{Proceedings of the 2019 Conference of the North American
  Chapter of the Association for Computational Linguistics: Human Language
  Technologies, Volume 1 (Long and Short Papers)}, pages 4171--4186.

\bibitem[{Dinarelli and Grobol(2019)}]{dinarelli2019seq2biseq}
Marco Dinarelli and Lo{\"\i}c Grobol. 2019.
\newblock Seq2biseq: Bidirectional output-wise recurrent neural networks for
  sequence modelling.
\newblock \emph{arXiv preprint arXiv:1904.04733}.

\bibitem[{Dong et~al.(2021)Dong, Cordonnier, and Loukas}]{pmlr-v139-dong21a}
Yihe Dong, Jean-Baptiste Cordonnier, and Andreas Loukas. 2021.
\newblock \href {https://proceedings.mlr.press/v139/dong21a.html} {Attention is
  not all you need: pure attention loses rank doubly exponentially with depth}.
\newblock In \emph{Proceedings of the 38th International Conference on Machine
  Learning}, volume 139 of \emph{Proceedings of Machine Learning Research},
  pages 2793--2803. PMLR.

\bibitem[{Elhage et~al.(2021)Elhage, Nanda, Olsson, Henighan, Joseph, Mann,
  Askell, Bai, Chen, Conerly et~al.}]{elhage2021mathematical}
N~Elhage, N~Nanda, C~Olsson, T~Henighan, N~Joseph, B~Mann, A~Askell, Y~Bai,
  A~Chen, T~Conerly, et~al. 2021.
\newblock A mathematical framework for transformer circuits.
\newblock \emph{Transformer Circuits Thread}.

\bibitem[{Geva et~al.(2022)Geva, Caciularu, Wang, and
  Goldberg}]{geva2022transformer}
Mor Geva, Avi Caciularu, Kevin~Ro Wang, and Yoav Goldberg. 2022.
\newblock Transformer feed-forward layers build predictions by promoting
  concepts in the vocabulary space.
\newblock \emph{arXiv preprint arXiv:2203.14680}.

\bibitem[{Geva et~al.(2021)Geva, Schuster, Berant, and
  Levy}]{geva-etal-2021-transformer}
Mor Geva, Roei Schuster, Jonathan Berant, and Omer Levy. 2021.
\newblock \href {https://doi.org/10.18653/v1/2021.emnlp-main.446} {Transformer
  feed-forward layers are key-value memories}.
\newblock In \emph{Proceedings of the 2021 Conference on Empirical Methods in
  Natural Language Processing}, pages 5484--5495, Online and Punta Cana,
  Dominican Republic. Association for Computational Linguistics.

\bibitem[{He et~al.(2021)He, Ravula, Kanagal, and
  Ainslie}]{he-etal-2021-realformer}
Ruining He, Anirudh Ravula, Bhargav Kanagal, and Joshua Ainslie. 2021.
\newblock \href {https://doi.org/10.18653/v1/2021.findings-acl.81}
  {{R}eal{F}ormer: Transformer likes residual attention}.
\newblock In \emph{Findings of the Association for Computational Linguistics:
  ACL-IJCNLP 2021}, pages 929--943, Online. Association for Computational
  Linguistics.

\bibitem[{Hewitt and Manning(2019)}]{hewitt-manning-2019-structural}
John Hewitt and Christopher~D. Manning. 2019.
\newblock \href {https://doi.org/10.18653/v1/N19-1419} {{A} structural probe
  for finding syntax in word representations}.
\newblock In \emph{Proceedings of the 2019 Conference of the North {A}merican
  Chapter of the Association for Computational Linguistics: Human Language
  Technologies, Volume 1 (Long and Short Papers)}, pages 4129--4138,
  Minneapolis, Minnesota. Association for Computational Linguistics.

\bibitem[{Htut et~al.(2019)Htut, Phang, Bordia, and Bowman}]{htut2019attention}
Phu~Mon Htut, Jason Phang, Shikha Bordia, and Samuel~R Bowman. 2019.
\newblock Do attention heads in bert track syntactic dependencies?
\newblock \emph{arXiv preprint arXiv:1911.12246}.

\bibitem[{Hu et~al.(2020)Hu, Gauthier, Qian, Wilcox, and
  Levy}]{hu-etal-2020-systematic}
Jennifer Hu, Jon Gauthier, Peng Qian, Ethan Wilcox, and Roger Levy. 2020.
\newblock \href {https://doi.org/10.18653/v1/2020.acl-main.158} {A systematic
  assessment of syntactic generalization in neural language models}.
\newblock In \emph{Proceedings of the 58th Annual Meeting of the Association
  for Computational Linguistics}, pages 1725--1744, Online. Association for
  Computational Linguistics.

\bibitem[{Kim and Linzen(2020)}]{kim-linzen-2020-cogs}
Najoung Kim and Tal Linzen. 2020.
\newblock \href {https://doi.org/10.18653/v1/2020.emnlp-main.731} {{COGS}: A
  compositional generalization challenge based on semantic interpretation}.
\newblock In \emph{Proceedings of the 2020 Conference on Empirical Methods in
  Natural Language Processing (EMNLP)}, pages 9087--9105, Online. Association
  for Computational Linguistics.

\bibitem[{Kingma and Ba(2015)}]{kingma2014adamam}
Diederik~P. Kingma and Jimmy Ba. 2015.
\newblock \href {http://arxiv.org/abs/1412.6980} {Adam: {A} method for
  stochastic optimization}.
\newblock In \emph{3rd International Conference on Learning Representations,
  {ICLR} 2015, San Diego, CA, USA, May 7-9, 2015, Conference Track
  Proceedings}.

\bibitem[{Kitaev et~al.(2022)Kitaev, Lu, and Klein}]{kitaev-etal-2022-learned}
Nikita Kitaev, Thomas Lu, and Dan Klein. 2022.
\newblock \href {https://doi.org/10.18653/v1/2022.acl-long.220} {Learned
  incremental representations for parsing}.
\newblock In \emph{Proceedings of the 60th Annual Meeting of the Association
  for Computational Linguistics (Volume 1: Long Papers)}, pages 3086--3095,
  Dublin, Ireland. Association for Computational Linguistics.

\bibitem[{Lan et~al.(2019)Lan, Chen, Goodman, Gimpel, Sharma, and
  Soricut}]{lan2019albert}
Zhenzhong Lan, Mingda Chen, Sebastian Goodman, Kevin Gimpel, Piyush Sharma, and
  Radu Soricut. 2019.
\newblock Albert: A lite bert for self-supervised learning of language
  representations.
\newblock In \emph{International Conference on Learning Representations}, pages
  1--16.

\bibitem[{L{\^e}-Huu and Alahari(2021)}]{le2021regularized}
{\DJ}~Khu{\^e} L{\^e}-Huu and Karteek Alahari. 2021.
\newblock Regularized frank-wolfe for dense crfs: Generalizing mean field and
  beyond.
\newblock \emph{Advances in Neural Information Processing Systems},
  34:1453--1467.

\bibitem[{Liu et~al.(2020)Liu, Ott, Goyal, Du, Joshi, Chen, Levy, Lewis,
  Zettlemoyer, and Stoyanov}]{liu2020roberta}
Yinhan Liu, Myle Ott, Naman Goyal, Jingfei Du, Mandar Joshi, Danqi Chen, Omer
  Levy, Mike Lewis, Luke Zettlemoyer, and Veselin Stoyanov. 2020.
\newblock Roberta: A robustly optimized bert pretraining approach.
\newblock In \emph{International Conference on Learning Representations}, pages
  1--15.

\bibitem[{Marcus et~al.(1993)Marcus, Marcinkiewicz, and
  Santorini}]{marcinkiewicz1994building}
Mitchell~P. Marcus, Mary~Ann Marcinkiewicz, and Beatrice Santorini. 1993.
\newblock Building a large annotated corpus of english: The penn treebank.
\newblock \emph{Comput. Linguist.}, 19(2):313--330.

\bibitem[{Mikolov et~al.(2012)}]{mikolov2012statistical}
Tom{\'a}{\v{s}} Mikolov et~al. 2012.
\newblock Statistical language models based on neural networks.
\newblock \emph{Presentation at Google, Mountain View, 2nd April}, 80:26.

\bibitem[{Ontan{\'o}n et~al.(2021)Ontan{\'o}n, Ainslie, Cvicek, and
  Fisher}]{ontanon2021making}
Santiago Ontan{\'o}n, Joshua Ainslie, Vaclav Cvicek, and Zachary Fisher. 2021.
\newblock Making transformers solve compositional tasks.
\newblock \emph{arXiv preprint arXiv:2108.04378}.

\bibitem[{Ravishankar et~al.(2021)Ravishankar, Kulmizev, Abdou, S{\o}gaard, and
  Nivre}]{ravishankar-etal-2021-attention}
Vinit Ravishankar, Artur Kulmizev, Mostafa Abdou, Anders S{\o}gaard, and Joakim
  Nivre. 2021.
\newblock \href {https://doi.org/10.18653/v1/2021.eacl-main.264} {Attention can
  reflect syntactic structure (if you let it)}.
\newblock In \emph{Proceedings of the 16th Conference of the European Chapter
  of the Association for Computational Linguistics: Main Volume}, pages
  3031--3045, Online. Association for Computational Linguistics.

\bibitem[{Shaw et~al.(2018)Shaw, Uszkoreit, and Vaswani}]{shaw-etal-2018-self}
Peter Shaw, Jakob Uszkoreit, and Ashish Vaswani. 2018.
\newblock \href {https://doi.org/10.18653/v1/N18-2074} {Self-attention with
  relative position representations}.
\newblock In \emph{Proceedings of the 2018 Conference of the North {A}merican
  Chapter of the Association for Computational Linguistics: Human Language
  Technologies, Volume 2 (Short Papers)}, pages 464--468, New Orleans,
  Louisiana. Association for Computational Linguistics.

\bibitem[{Shen et~al.(2022)Shen, Tan, Sordoni, Li, Zhou, and
  Courville}]{shen-etal-2022-unsupervised}
Yikang Shen, Shawn Tan, Alessandro Sordoni, Peng Li, Jie Zhou, and Aaron
  Courville. 2022.
\newblock \href {https://doi.org/10.18653/v1/2022.acl-long.327} {Unsupervised
  dependency graph network}.
\newblock In \emph{Proceedings of the 60th Annual Meeting of the Association
  for Computational Linguistics (Volume 1: Long Papers)}, pages 4767--4784,
  Dublin, Ireland. Association for Computational Linguistics.

\bibitem[{Shen et~al.(2021)Shen, Tay, Zheng, Bahri, Metzler, and
  Courville}]{shen-etal-2021-structformer}
Yikang Shen, Yi~Tay, Che Zheng, Dara Bahri, Donald Metzler, and Aaron
  Courville. 2021.
\newblock \href {https://doi.org/10.18653/v1/2021.acl-long.559}
  {{S}truct{F}ormer: Joint unsupervised induction of dependency and
  constituency structure from masked language modeling}.
\newblock In \emph{Proceedings of the 59th Annual Meeting of the Association
  for Computational Linguistics and the 11th International Joint Conference on
  Natural Language Processing (Volume 1: Long Papers)}, pages 7196--7209,
  Online. Association for Computational Linguistics.

\bibitem[{Shimony(1994)}]{shimony1994finding}
Solomon~Eyal Shimony. 1994.
\newblock Finding maps for belief networks is np-hard.
\newblock \emph{Artificial intelligence}, 68(2):399--410.

\bibitem[{Sleator and Temperley(1993)}]{sleator-temperley-1993-parsing}
Daniel~D. Sleator and Davy Temperley. 1993.
\newblock \href {https://aclanthology.org/1993.iwpt-1.22} {Parsing {E}nglish
  with a link grammar}.
\newblock In \emph{Proceedings of the Third International Workshop on Parsing
  Technologies}, pages 277--292, Tilburg, Netherlands and Durbuy, Belgium.
  Association for Computational Linguistics.

\bibitem[{Socher et~al.(2013)Socher, Perelygin, Wu, Chuang, Manning, Ng, and
  Potts}]{socher-etal-2013-recursive}
Richard Socher, Alex Perelygin, Jean Wu, Jason Chuang, Christopher~D. Manning,
  Andrew Ng, and Christopher Potts. 2013.
\newblock \href {https://aclanthology.org/D13-1170} {Recursive deep models for
  semantic compositionality over a sentiment treebank}.
\newblock In \emph{Proceedings of the 2013 Conference on Empirical Methods in
  Natural Language Processing}, pages 1631--1642, Seattle, Washington, USA.
  Association for Computational Linguistics.

\bibitem[{Strubell et~al.(2018)Strubell, Verga, Andor, Weiss, and
  McCallum}]{strubell-etal-2018-linguistically}
Emma Strubell, Patrick Verga, Daniel Andor, David Weiss, and Andrew McCallum.
  2018.
\newblock \href {https://doi.org/10.18653/v1/D18-1548} {Linguistically-informed
  self-attention for semantic role labeling}.
\newblock In \emph{Proceedings of the 2018 Conference on Empirical Methods in
  Natural Language Processing}, pages 5027--5038, Brussels, Belgium.
  Association for Computational Linguistics.

\bibitem[{Sukhbaatar et~al.(2019)Sukhbaatar, Grave, Lample, Jegou, and
  Joulin}]{sukhbaatar2019augmenting}
Sainbayar Sukhbaatar, Edouard Grave, Guillaume Lample, Herve Jegou, and Armand
  Joulin. 2019.
\newblock Augmenting self-attention with persistent memory.
\newblock \emph{arXiv preprint arXiv:1907.01470}.

\bibitem[{Tenney et~al.(2019)Tenney, Xia, Chen, Wang, Poliak, McCoy, Kim,
  Durme, Bowman, Das, and Pavlick}]{tenney2018what}
Ian Tenney, Patrick Xia, Berlin Chen, Alex Wang, Adam Poliak, R~Thomas McCoy,
  Najoung Kim, Benjamin~Van Durme, Sam Bowman, Dipanjan Das, and Ellie Pavlick.
  2019.
\newblock \href {https://openreview.net/forum?id=SJzSgnRcKX} {What do you learn
  from context? probing for sentence structure in contextualized word
  representations}.
\newblock In \emph{International Conference on Learning Representations}.

\bibitem[{Tjong Kim~Sang and
  De~Meulder(2003)}]{tjong-kim-sang-de-meulder-2003-introduction}
Erik~F. Tjong Kim~Sang and Fien De~Meulder. 2003.
\newblock \href {https://aclanthology.org/W03-0419} {Introduction to the
  {C}o{NLL}-2003 shared task: Language-independent named entity recognition}.
\newblock In \emph{Proceedings of the Seventh Conference on Natural Language
  Learning at {HLT}-{NAACL} 2003}, pages 142--147.

\bibitem[{Vaswani et~al.(2017)Vaswani, Shazeer, Parmar, Uszkoreit, Jones,
  Gomez, Kaiser, and Polosukhin}]{vaswani2017attention}
Ashish Vaswani, Noam Shazeer, Niki Parmar, Jakob Uszkoreit, Llion Jones,
  Aidan~N Gomez, {\L}ukasz Kaiser, and Illia Polosukhin. 2017.
\newblock Attention is all you need.
\newblock \emph{Advances in neural information processing systems}, 30.

\bibitem[{Wang and Tu(2020)}]{wang-tu-2020-second}
Xinyu Wang and Kewei Tu. 2020.
\newblock \href {https://aclanthology.org/2020.aacl-main.12} {Second-order
  neural dependency parsing with message passing and end-to-end training}.
\newblock In \emph{Proceedings of the 1st Conference of the Asia-Pacific
  Chapter of the Association for Computational Linguistics and the 10th
  International Joint Conference on Natural Language Processing}, pages 93--99,
  Suzhou, China. Association for Computational Linguistics.

\bibitem[{Wang et~al.(2019)Wang, Lee, and Chen}]{wang-etal-2019-tree}
Yaushian Wang, Hung-Yi Lee, and Yun-Nung Chen. 2019.
\newblock \href {https://doi.org/10.18653/v1/D19-1098} {Tree transformer:
  Integrating tree structures into self-attention}.
\newblock In \emph{Proceedings of the 2019 Conference on Empirical Methods in
  Natural Language Processing and the 9th International Joint Conference on
  Natural Language Processing (EMNLP-IJCNLP)}, pages 1061--1070, Hong Kong,
  China. Association for Computational Linguistics.

\bibitem[{Xiong et~al.(2020)Xiong, Yang, He, Zheng, Zheng, Xing, Zhang, Lan,
  Wang, and Liu}]{xiong2020layer}
Ruibin Xiong, Yunchang Yang, Di~He, Kai Zheng, Shuxin Zheng, Chen Xing,
  Huishuai Zhang, Yanyan Lan, Liwei Wang, and Tieyan Liu. 2020.
\newblock On layer normalization in the transformer architecture.
\newblock In \emph{International Conference on Machine Learning}, pages
  10524--10533. PMLR.

\end{thebibliography}
\bibliographystyle{acl_natbib}

\appendix



\section{Extended Entropic Frank-Wolfe}
\label{apx:ex-efw}

In Section~\ref{sec:var-rw}, we add message weights to the update function of the posterior marginal distributions. It follows an extension of the Entropic Frank-Wolfe algorithm \cite{le2021regularized}, which is a generalization of MFVI. Below we briefly introduce the algorithm and our extension following most of the notations in their paper.

\subsection{Entropic Frank-Wolfe}
\label{apx:ex-sub-efw}

Suppose we want to minimize a continuous differentiable energy function $E(\cdot)$. Vanilla Frank-Wolfe solves the problem $\min_{\mathbf{x} \in \mathcal{X}} E(\mathbf{x})$ by starting from a feasible $\mathbf{x}^{(0)} \in \mathcal{X}$ at time step 0, and iterating the following steps:
\begin{equation*}
  \begin{aligned}
    &\mathbf{p}^{(t)} \in \underset{\mathbf{p} \in \mathcal{X}}{\operatorname{argmin}}\left\langle\nabla E\left(\mathbf{x}^{(t)}\right), \mathbf{p}\right\rangle \\
    &\mathbf{x}^{(t+1)}=\mathbf{x}^{(t)}+\alpha_{t}\left(\mathbf{p}^{(t)}-\mathbf{x}^{(t)}\right)
  \end{aligned}
\end{equation*}
where $\alpha_{t} \in[0,1]$ follows some stepsize scheme, $\mathcal{X}$ is the value range of $\mathbf{x}$, and here we let $\mathbf{x} \in \mathbb{R}^{n \times d}$ be the concatenation of the distributions over the label set of all variables in CRF.

Regularized Frank-Wolfe \cite{le2021regularized} adds a regularization term $r(\cdot)$ to the objective. It solves the new objective $E(\mathbf{x}) + r(\mathbf{x})$ by iterating
\begin{equation*} \label{apxeq:reg-fw}
  \begin{aligned}
    &\mathbf{p}^{(t)} \in \underset{\mathbf{p} \in \mathcal{X}}{\operatorname{argmin}}\left\{\left\langle\nabla E\left(\mathbf{x}^{(t)}\right), \mathbf{p}\right\rangle+r(\mathbf{p})\right\} \\
    &\mathbf{x}^{(t+1)}=\mathbf{x}^{(t)}+\alpha_{t}\left(\mathbf{p}^{(t)}-\mathbf{x}^{(t)}\right)
  \end{aligned}
\end{equation*}
It has been proved that regularized Frank-Wolfe achieves a sublinear rate of convergence $O(1/\sqrt{t})$ for suitable stepsize schemes.

Entropic Frank-Wolfe is a special case of regularized Frank-Wolfe, which sets the regularization term as an entropy function $r(\mathbf{x}) = -\lambda H(\mathbf{x})$, where $H(\mathbf{x})=-\sum_{i \in \mathcal{V}} \sum_{s \in \mathcal{S}} x_{i s} \log x_{i s}$, $\mathcal{S}$ is the label set of the variables, $\mathcal{V}$ is the set of indices of the variables. Entropy Frank-Wolfe has a closed-form solution for the update process
\begin{equation} \label{apxeq:mfvi}
  \begin{aligned}
    \mathbf{p}^{(t)}&=\underset{\mathbf{p} \in \mathcal{X}}{\operatorname{argmin}}\left\{\left\langle\nabla E\left(\mathbf{x}^{(t)}\right), \mathbf{p}\right\rangle-\lambda H(\mathbf{p})\right\}\\
    &=\operatorname{softmax}\left(-\frac{1}{\lambda}\left(\nabla E\left(\mathbf{x}^{(t)}\right)\right)\right) \quad \forall t \geq 0
  \end{aligned}
\end{equation}
When $\lambda = 1$ and $\alpha_t = 1, \forall t \geq 0$, it is the same as the mean field algorithm.

\subsection{Extended Entropic Frank-Wolfe}

We extend the Entropic Frank-Wolfe algorithm by using a more general regularization term
\begin{equation*}
  r(\mathbf{x})=- \sum_{i \in \mathcal{V}} \lambda_i H(\mathbf{x}_i)
\end{equation*}
, where $\lambda_i>0$ is the regularization weight of the $i$-th variable and $H(\mathbf{x}_i)=-\sum_{s \in \mathcal{S}} x_{i s} \log x_{i s}$ is the entropy of $\mathbf{x}_i$ over the probability simplex $\Delta=\left\{\mathbf{x} \in \mathbb{R}^{d}: \mathbf{x} \geq \mathbf{0}, \mathbf{1}^{\top} \mathbf{x}=1\right\}$. It allows us to assign different regularization weights for different variables. We claim that the update function could be written as
\begin{equation} \label{apxeq:eefw}
  \begin{aligned}
    \mathbf{p}^{(t)}&=\underset{\mathbf{p} \in \mathcal{X}}{\operatorname{argmin}}\left\{\left\langle\nabla E\left(\mathbf{x}^{(t)}\right), \mathbf{p}\right\rangle-\lambda_i H(\mathbf{p}_i)\right\}\\
    &=\operatorname{softmax}\left(\mathbf{R}\right) \quad \forall t \geq 0
  \end{aligned}
\end{equation}
, where $\mathbf{R} \in \mathbb{R}^{nd}$ and
\begin{equation*}
\mathbf{R}_i = -\frac{1}{\lambda_i}\left(\nabla E\left(\mathbf{x}_i^{(t)}\right)\right) \quad \forall i \in \mathcal{V}
\end{equation*}

This extension is still a special case of the regularized Frank-Wolfe algorithm. As a result, it inherits all the convergence properties from the regularized Frank-Wolfe mentioned in the previous section. On the other hand, it is also an extension of MFVI, which allows adding a message weight to each variable during inference.

\subsection{A Proof for Extended Entropic Frank-Wolfe}

We give a simple proof to the close-form solution of extended Entropic Frank-Wolfe in Equation~\ref{apxeq:eefw}. Since the optimization could reduce to $n$ independent subproblems over each $i \in \mathcal{V}$, We only need to give the closed-form solution to each subproblem:

\begin{lemma}
  For a given vector $\mathbf{c} \in \mathbb{R}^{d}$, $\lambda > 0$, the optimal solution $\mathrm{z}^{*}$ to 
  \begin{equation*}
    \min _{\mathbf{z} \in \Delta}\left\{\langle\mathbf{c}, \mathbf{z}\rangle+\lambda \sum_{s=1}^{d} z_{s} \log z_{s}\right\}
  \end{equation*}
  is $\mathbf{z}^{*}=\operatorname{softmax}(-\frac{1}{\lambda}\mathbf{c})$, where $\Delta$ is the probability simplex $\left\{\mathbf{x} \in \mathbb{R}^{d}: \mathbf{x} \geq \mathbf{0}, \mathbf{1}^{\top} \mathbf{x}=1\right\}$.
\end{lemma}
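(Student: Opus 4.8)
The plan is to recognize this as a standard entropy-regularized linear program over the simplex and to solve it by Lagrangian stationarity, then certify global optimality via convexity. First I would observe that the objective $f(\mathbf{z}) = \langle \mathbf{c}, \mathbf{z}\rangle + \lambda \sum_{s} z_s \log z_s$ is strictly convex on $\Delta$: the linear term contributes nothing to the Hessian, while the negative-entropy term has Hessian $\operatorname{diag}(\lambda/z_s) \succ 0$ on the interior. Since $\Delta$ is compact and convex and $f$ (with the usual continuous extension $0\log 0 = 0$) is continuous, a unique minimizer $\mathbf{z}^*$ exists.

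Next I would argue that the nonnegativity constraints are inactive, so that only the equality constraint $\mathbf{1}^\top \mathbf{z} = 1$ needs a multiplier. The key observation is that $\partial f/\partial z_s = c_s + \lambda(\log z_s + 1) \to -\infty$ as $z_s \to 0^+$, so the objective strictly decreases as any coordinate leaves the boundary; hence $\mathbf{z}^*$ lies in the relative interior with every $z_s^* > 0$. With the inequality constraints thus ruled out, I would form the Lagrangian $L(\mathbf{z}, \nu) = \langle \mathbf{c}, \mathbf{z}\rangle + \lambda \sum_s z_s \log z_s + \nu(\mathbf{1}^\top \mathbf{z} - 1)$ and set each $\partial L / \partial z_s = c_s + \lambda(\log z_s + 1) + \nu = 0$.

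Solving the stationarity condition gives $z_s = \exp(-c_s/\lambda)\cdot \exp(-(\lambda+\nu)/\lambda)$, i.e. each coordinate is proportional to $\exp(-c_s/\lambda)$ with a single constant absorbing $\nu$. I would then pin down that constant by imposing $\sum_s z_s = 1$, which yields exactly $z_s^* = \exp(-c_s/\lambda)/\sum_t \exp(-c_t/\lambda)$, that is $\mathbf{z}^* = \operatorname{softmax}(-\tfrac{1}{\lambda}\mathbf{c})$. Because $f$ is convex and the remaining constraint is affine, this stationary point is the global minimum.

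I expect the main obstacle to be the boundary argument rather than the algebra: one must justify discarding the KKT multipliers for $z_s \geq 0$, which relies on the blow-up of the entropy gradient near the boundary. An alternative that sidesteps explicit boundary analysis is to plug in the candidate and verify optimality directly through the first-order convexity inequality $f(\mathbf{z}) \geq f(\mathbf{z}^*) + \langle \nabla f(\mathbf{z}^*), \mathbf{z} - \mathbf{z}^*\rangle$, noting that at $\mathbf{z}^*$ one has $\nabla f(\mathbf{z}^*)_s = \lambda(1 - \log Z)$ with $Z = \sum_t \exp(-c_t/\lambda)$, a constant independent of $s$; hence $\langle \nabla f(\mathbf{z}^*), \mathbf{z} - \mathbf{z}^*\rangle = \lambda(1-\log Z)\,\mathbf{1}^\top(\mathbf{z} - \mathbf{z}^*) = 0$ for every feasible $\mathbf{z}$, confirming that $\mathbf{z}^*$ is the global minimizer without a separate boundary discussion.
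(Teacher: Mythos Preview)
Your proposal is correct and follows essentially the same Lagrangian route as the paper: form the stationarity condition, observe that each coordinate is proportional to $\exp(-c_s/\lambda)$, and normalize via the simplex constraint. The only cosmetic difference is that the paper keeps the inequality multipliers $\mu_s$ explicitly and invokes Slater's condition plus complementary slackness to conclude $\mu_s=0$, whereas you discard them up front via the gradient blow-up at the boundary; your alternative verification through the first-order convexity inequality is a nice extra but not needed to match the paper.
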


\begin{proof}
  We can rewrite the problem as
  \begin{equation*}
    \begin{aligned}
    &\min_{\mathbf{z}} \quad \langle\mathbf{c}, \mathbf{z}\rangle+\lambda \sum_{s=1}^{d} z_{s} \log z_{s}\\
    &\begin{array}{r@{\quad}r@{\quad}l@{\quad}l}
    s.t. &\mathbf{1}^{\top} \mathbf{z} & =1, \\
        &-\mathbf{z} & \leq \mathbf{0}, \\
    \end{array}
    \end{aligned}
  \end{equation*}
  The Lagrangian of the above problem is given by
  \begin{equation*}
    \begin{aligned}
    L(\mathbf{z}, \boldsymbol{\mu}, \nu) & =\langle\mathbf{c}, \mathbf{z}\rangle+\lambda\sum_{s=1}^{d} z_{s} \log z_{s}\\
    &+\boldsymbol{\mu}^{\top}(-\mathbf{z})+\nu\left(\mathbf{1}^{\top} \mathbf{z}-1\right) \\
    & =-\nu+\sum_{s=1}^{d}(c_{s} z_{s}+\lambda z_{s} \log z_{s}\\
    &-\mu_{s} z_{s}+\nu z_{s})
    \end{aligned}
  \end{equation*}
  where $\boldsymbol{\mu}=\left(\mu_{1}, \mu_{2}, \ldots, \mu_{d}\right) \geq \mathbf{0}$ and $\nu \in \mathbb{R}$ are the Lagrange multipliers.

  Since the given problem is convex and there exists $\mathbf{z} \in \mathbb{R}^{d}$ such that $\mathbf{1}^{\top} \mathbf{z}=1$ and $\mathbf{z}>\mathbf{0}$, the Slater's constraint qualification holds. Thus, it suffices to solve the following Karush-Kuhn-Tucker (KKT) system to obtain the optimal solution:
  \begin{equation*}
  \begin{aligned}
  c_{s}+\lambda\log z_{s}+1-\mu_{s}+\nu & =0 \quad \forall 1 \leq s \leq d, \\
  \mathbf{1}^{\top} \mathbf{z} & =1, \\
  \mathbf{z} & \geq \mathbf{0}, \\
  \boldsymbol{\mu} & \geq \mathbf{0}, \\
  \mu_{s} z_{s} & =0 \quad \forall 1 \leq s \leq d .
  \end{aligned}
  \end{equation*}
  The first equation implies $\forall 1 \leq s \leq d, z_{s}>0$, and thus in combination with the last, we obtain $\forall 1 \leq s \leq d, \mu_{s}=0$. Therefore, the first equation becomes
  \begin{equation*}
    c_{s}+\lambda\log z_{s}+1+\nu = 0
  \end{equation*}
  $\forall 1 \leq s \leq d$. Rewrite the equation as
  \begin{equation*}
  z_{s}=\exp \left(\frac{-1-\nu}{\lambda}\right) \exp \left(-\frac{1}{\lambda}c_{s}\right)
  \end{equation*}
  $\forall 1 \leq s \leq d$. Summing up this result for all $s$, and taking into account the second equation, we have
  \begin{equation*}
    \sum_{s=1}^d \exp \left(\frac{-1-\nu}{\lambda}\right) \exp \left(-\frac{1}{\lambda}c_{s}\right) = 1
  \end{equation*}
  That is,
  \begin{equation*}
    \exp \left(\frac{-1-\nu}{\lambda}\right)=\frac{1}{\sum_{s=1}^{d} \exp \left(-\frac{1}{\lambda}c_{s}\right)}
  \end{equation*}
  Combine these two formulas, we have
  \begin{equation*}
    z_{s}=\frac{\exp \left(-\frac{1}{\lambda}c_{s}\right)}{\sum_{t=1}^{d} \exp \left(-\frac{1}{\lambda}c_{t}\right)}
  \end{equation*}
  $\forall 1 \leq s \leq d$. In other words, $\mathbf{z}=\operatorname{softmax}(-\frac{1}{\lambda}\mathbf{c})$.
\end{proof}

\subsection{Inference in CRF}

In this work, we apply the extended Entropic Frank-Wolfe to do inference in the CRF. Let $\mathbf{s} = (Z_1, \cdots, Z_n, H_1^{(1)}, \cdots, H_n^{(1)}, H_1^{(2)}, \cdots, H_n^{(h)})$ denote an assignment to all the random variables. Our CRF encodes the joint distribution
\begin{equation*}
    p(\mathbf{s}) = \frac{1}{Z} \prod_i \phi_u(Z_i) \prod_c \prod_i \prod_{j\ne i} \phi_t(H_i^{(c)}, Z_i, Z_j)
\end{equation*}
where $Z$ is a normalization factor. The objective is to find an assignment $\mathbf{s}$ that maximizes the joint distribution $p(\mathbf{s})$. To express in the form of an energy function, let $p(\mathbf{s}) = \frac{1}{Z} \exp(-e(\mathbf{s}))$, we have
\begin{equation*}
    e(\mathbf{s}) = - \sum_i \mathbf{S}_{w_i, Z_i} - \sum_c \sum_i \sum_{j\ne i} \mathbbm{1}_{H_i = j} \mathbf{T}^{(c)}_{Z_i, Z_j}
\end{equation*}
where $\mathbbm{1}_{H_i = j}$ is an indicator function, which is equal to 1 if $H_i = j$ and is equal to 0 otherwise. The objective could now be expressed as minimizing the energy function $e(\mathbf{s})$.

In general, the problem of CRF inference is NP-Hard \cite{shimony1994finding}. In MFVI, we solve the continuous relaxation of the CRF problem instead. Let $\mathcal{X}$ be the simplex. That is, we allow a marginal distribution for each random variable. As in Section~\ref{sec:inference}, let $Q_i(\cdot)$ be the approximate marginal distribution over $Z_i$ and $Q_{ic}(\cdot)$ be the approximate marginal distribution over $H_i^{(c)}$. The energy function is then
\begin{equation*}
\begin{split}
    &E(Q_*) = - \sum_i \sum_a Q_i(a) \mathbf{S}_{w_i, a}\\
    &- \sum_c \sum_i \sum_{j\ne i} \sum_a \sum_b Q_i(a) Q_j(b) Q_{ic}(j) \mathbf{T}^{(c)}_{a, b}  
\end{split}
\end{equation*}
Then we have
\begin{equation*}
\begin{split}
    \frac{\partial E}{\partial Q_i(a)} &= - \mathbf{S}_{w_i, a} - \sum_c \sum_{j\ne i} \sum_b\\
    & \left( Q_j(b) Q_{ic}(j) \mathbf{T}^{(c)}_{a, b} + Q_j(b) Q_{jc}(i) \mathbf{T}^{(c)}_{b, a} \right) \\
    \frac{\partial E}{\partial Q_{ic}(j)} &= - \sum_a \sum_b Q_i(a) Q_j(b) \mathbf{T}^{(c)}_{a, b}
\end{split}
\end{equation*}
In MFVI, the update for each distribution is the softmax of the derivative (let $\lambda = 1$ and $\alpha_t = 1, \forall t \geq 0$ in Equation~\ref{apxeq:mfvi}). That is,
\begin{align*}
    Q_i^{(t)}(a) &\propto \exp \left( - \frac{\partial E^{(t-1)}}{\partial Q_i^{(t-1)}(a)} \right) \\
    Q_{ic}^{(t)}(j) &\propto \exp \left( - \frac{\partial E^{(t-1)}}{\partial Q_{ic}^{(t-1)}(j)} \right)
\end{align*}
Together with Equation~\ref{eq:msg-f} and \ref{eq:msg-g}, we have 
\begin{align*}
    \frac{\partial E^{(t-1)}}{\partial Q_i^{(t-1)}(a)} &= - \mathbf{S}_{w_i, a} - \mathcal{G}_{i}^{(t-1)}(a) \\
    \frac{\partial E^{(t-1)}}{\partial Q_{ic}^{(t-1)}(j)} &= - \mathcal{F}_{ic}^{(t-1)}(j)
\end{align*}
, which directly leads us to Formula~\ref{eq:msg-qz} and \ref{eq:msg-qh}.

In the extended Entropic Frank-Wolfe, the update for each distribution is the regularized softmax of the derivative (Equation~\ref{apxeq:eefw}). That is,
\begin{align*}
    Q_i^{(t)}(a) &\propto \exp \left( - \frac{1}{\lambda_{i}} \frac{\partial E^{(t-1)}}{\partial Q_i^{(t-1)}(a)} \right) \\
    Q_{ic}^{(t)}(j) &\propto \exp \left( - \frac{1}{\lambda_{ic}} \frac{\partial E^{(t-1)}}{\partial Q_{ic}^{(t-1)}(j)} \right)
\end{align*}
Let $\lambda_{i} = \lambda_{Z} > 0, \lambda_{ic} = \lambda_{H} > 0$, $\forall i, c$. Then it is equivalent to Formula~\ref{eq:reg-weight-z} and \ref{eq:reg-weight-h} with regularization weight $\lambda_Z > 0$ for $Z$ variables and $\lambda_H > 0$ for $H$ variables.

\subsection{The Choice of Message Weights}
\label{apx:choice}

In Section~\ref{sec:var-rw}, we set $\lambda_Z = 1$ and $\lambda_H = \frac{1}{d}$ by default. This choice comes from a theoretical analysis similar to \citet{vaswani2017attention}, and we empirically find it helpful to improve the performance.

Assume that the ternary scores in $\mathbf{T}$ are independent random variables with mean 0 and variance $\sigma^2$. Then from Equation~\ref{eq:msg-f}, we know that $\mathcal{F}_{ic}^{(t)}(j)$ is a weighted sum of these random variables. Suppose the weights are uniformly distributed, then $\mathcal{F}_{ic}^{(t)}(j)$ has mean 0 and variance $\frac{d^2}{(d^2)^2}\sigma^2 = \frac{1}{d^2} \sigma^2$. Since $d$ is usually set to several hundred, this might result in a small variance in the message received by $H$ variables and thus lead to uniformly distributed $H$ variables. To balance this effect, we set $\lambda_H = \frac{1}{d}$ such that the variance of $\frac{1}{\lambda_H} \mathcal{F}_{ic}^{(t)}(j)$ is still $\sigma^2$. From Equation~\ref{eq:msg-g} we know that the variance of $\mathcal{G}_{i}^{(t)}(a)$ is $\frac{2(n-1)}{hd} \sigma^2$. Here, since $n$ varies in sentences, it is impossible to set a fixed $\lambda_Z$ that always recovers the original variance $\sigma^2$. Compared to $\mathcal{F}_{ic}^{(t)}(j)$, the variance of $\mathcal{G}_{i}^{(t)}(a)$ does not change significantly. For simplicity, we set $\lambda_Z = 1$.

\section{More Extensions and Variants}
\label{apx:more-variants}

We have introduced several extensions and variants that are beneficial to the model performance in Section~\ref{sec:variants}. There are some other variants that we find do not bring significant improvement empirically, but might also be meaningful and have interesting correspondences to transformers.

\subsection{Step Size}
\label{apx:step-size}

In our model, we can retain information between iterations and do partially update with a proper step size. Let
\begin{align*}
  Q_i^{\star (t)}(a) &\propto \exp\left(\mathbf{S}_{w_i, a} + \mathcal{G}_{i}^{(t-1)}(a)\right) \\
  Q_{ic}^{\star (t)}(j) &\propto \exp\left(\mathcal{F}_{ic}^{(t-1)}(j)\right)
\end{align*}
be the original posterior marginal distributions of the variables at time step $t$, which is the same as Formula~\ref{eq:msg-qz} and \ref{eq:msg-qh}. We have the posterior distributions with step size
\begin{align*}
  Q_i^{(t)}(Z_i) &= \alpha_Z  Q_i^{\star (t)}(Z_i) + (1-\alpha_Z)  Q_i^{(t-1)}(Z_i) \\
  Q_{ic}^{(t)}(H_i^{(c)}) &= \alpha_H  Q_{ic}^{\star (t)}H_i^{(c)} + (1-\alpha_H)  Q_{ic}^{(t-1)}H_i^{(c)}
\end{align*}
where $\alpha_Z, \alpha_H \in (0, 1]$ are the step sizes of each update. When $\alpha_Z = \alpha_H = 1$, it is equivalent to the original model. We initialize these distribution by Formula~\ref{eq:init-qz} and \ref{eq:init-qh}. 

\subsection{Damping}

Similar to step size in Appendix~\ref{apx:step-size}, the damping approach also aims at retaining information between iterations. Instead of partially updating the posterior distribution, the damping approach partially updates the messages.

We define messages in time step $t$ as
\begin{align}
  \label{eq:apx-msg-z} M_i^{(t)}(a) &= \mathbf{S}_{w_i, a} + \mathcal{G}_{i}^{(t-1)}(a) \\
  \label{eq:apx-msg-h} M_{ic}^{(t)}(j) &= \mathcal{F}_{ic}^{(t-1)}(j)
\end{align}
where $M_i^{(t)}(Z_i)$ is the message passed to $Z_i$ and $M_{ic}^{(t)}(H_i^{(c)})$ is the message passed to $H_i^{(c)}$. Thus, Formula~\ref{eq:msg-qz} and \ref{eq:msg-qh} can be written as
\begin{align*}
  Q_i^{(t)}(a) &\propto \exp\left(M_i^{(t)}(a)\right) \\
  Q_{ic}^{(t)}(j) &\propto \exp\left(M_{ic}^{(t)}(j)\right)
\end{align*}

Now, we add damping factors $\beta_Z$ and $\beta_H$, which restrict the message update between iterations. We change Equation~\ref{eq:apx-msg-z} and \ref{eq:apx-msg-h} to
\begin{align*}
  &\begin{aligned}
    M_i^{(t)}(a) = &(1-\beta_Z) \left( \mathbf{S}_{w_i, a} + \mathcal{G}_{i}^{(t-1)}(a) \right) \\
    & + \beta_Z M_i^{(t-1)}(a)
  \end{aligned} \\
  &M_{ic}^{(t)}(j) = (1-\beta_H) \left( \mathcal{F}_{ic}^{(t-1)}(j) \right) + \beta_H M_{ic}^{(t-1)}(j)
\end{align*}

We initialize the message by
\begin{align*}
  M_i^{(0)}(a) &= \mathbf{S}_{w_i, a} \\
  M_{ic}^{(0)}(j) &= 0
\end{align*}

When $\beta_Z = \beta_H = 0$, there is no damping in the update process and it is equivalent to the original model. When $\beta_Z = 0.5$ and $\beta_H = 0$, it is similar to the residual connection in transformers. When $\beta_Z = \beta_H = 0.5$, it is similar to the residual attention mechanism proposed in RealFormer \cite{he-etal-2021-realformer}.

\subsection{Global Variables}
\label{apx:global-vars}

As we mentioned in Section~\ref{sec:cmp-whole-block}, probabilistic transformers do not have a feed-forward structure as in transformers. Feed-forward layers, however, constitute two-thirds of a transformer model's parameters. Recent researches show that the feed-forward layers might serve as an important part of transformers \cite{pmlr-v139-dong21a, geva-etal-2021-transformer, geva2022transformer}.

Inspired by \citet{sukhbaatar2019augmenting}, who combines the feed-forward layer and the self-attention layer into a unified all-attention layer, we design a similar structure based on dependency relations. Intuitively, we could add some global variables that are similar to the latent word representations ($Z$ variables) but these representations are global features that do not change with input sentences. We will introduce 3 different model designs below.


\subsubsection{All-dep}


Based on the intuition above, we add some global variables to the CRF model. Define $F_i$ as the $i$-th discrete global feature variable with the same label set as $Z$ variables, representing the global features of the corpus. The total number of global feature variables is $m$. These variables are observed and the distributions on the label set will not change during inference. The head of each word could either be another word or a global feature variable. That is, $H_i^{(c)} \in \{1, 2, \cdots, n, n+1, \cdots, n+m\}$. 


Then, for each word $w_i$ and global feature $F_j$ in channel $c$, we define a ternary potential function over $Z_i$, $H_i^{(c)}$ and $F_j$, which evaluates the compatibility between the labels of the word and the global feature of the entire corpus.
\begin{equation*}
  \begin{aligned}
    \phi_t(&H_i^{(c)}, Z_i, F_j)=\\
    &\left\{
      \begin{aligned}
      \exp(\mathbf{T}^{''(c)}_{Z_i, F_j}) & , & H_i^{(c)}=n+j \\
      1 & , & \text{otherwise}
      \end{aligned}
    \right.
  \end{aligned}
\end{equation*}
where $\mathbf{T}^{''(c)} \in \mathbb{R}^{d \times d}$ is a score matrix for channel $c$.

An illustration of the CRF model is shown in Figure~\ref{fig:factor-graph-naive-all-dep}. We call this setting \textit{all-dep} since the head of each word could either be another word or a dummy global feature variable. It follows the \textit{all-attn} setting in \citet{sukhbaatar2019augmenting}.

\begin{figure*}
  \centering
  \includegraphics[page=3,width=.8\textwidth,trim=20 200 200 60,clip]{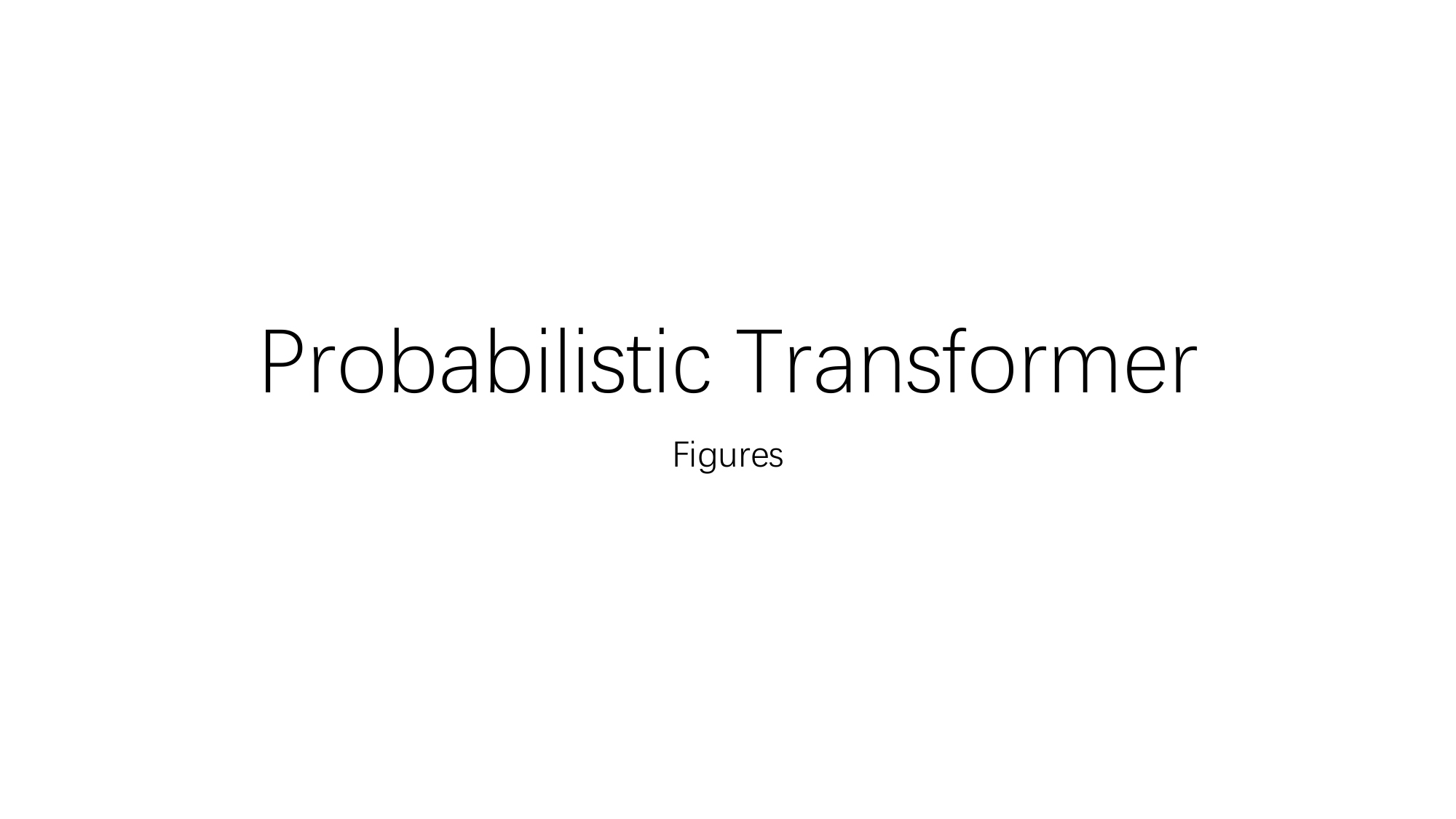}
  \caption{The factor graph for an intuitive design of CRF model with global variables where $n=m=2$. For clarity, ternary factors that connect to $H_i^{(c)}$ with $c>1$ are not shown in the figure.}
  \label{fig:factor-graph-naive-all-dep}
\end{figure*}

Notice that $F_j$ is a variable that does not participate in inference. It could be seen as part of the model. Thus, we could design an equivalent model that does not contain global feature variables but have a binary factor between $Z_i$ and $H_i^{(c)}$:
\begin{equation*}
  \begin{aligned}
    &\phi_b(H_i^{(c)}, Z_i)=\\
    &\left\{
    \begin{aligned}
    \prod_g \exp(P(F_{H_i^{(c)}-n} = g) \mathbf{T}^{''(c)}_{Z_i, g}) & , & H_i^{(c)}>n \\
    1 & , & \text{otherwise}
    \end{aligned}
    \right.
  \end{aligned}
\end{equation*}
where $P(F_{i} = g)$ is the probability that the $i$-th global variable has label $g$. It can be proved that the MFVI inference process for the model with global feature variables and the model with binary factors is the same. Move the product inside the exponential term, we have
\begin{equation*}
  \begin{aligned}
    &\phi_b(H_i^{(c)}, Z_i)=\\
    &\left\{
    \begin{aligned}
    \exp( \sum_g P(F_{H_i^{(c)}-n} = g) \mathbf{T}^{''(c)}_{Z_i, g}) & , & H_i^{(c)}>n \\
    1 & , & \text{otherwise}
    \end{aligned}
    \right.
  \end{aligned}
\end{equation*}
The term inside the exponential is a weighted sum of ternary scores. We may re-formulate this potential function with a simplified term:
\begin{equation*}
  \begin{aligned}
    \phi_b(&H_i^{(c)}, Z_i)=\\
    &\left\{
    \begin{aligned}
    \exp(\mathbf{B}_{H_i^{(c)}-n,Z_i}^{(c)}) & , & H_i^{(c)}>n \\
    1 & , & \text{otherwise}
    \end{aligned}
    \right.
  \end{aligned}
\end{equation*}
where $\mathbf{B}^{(c)} \in \mathbb{R}^{m, d}$ is a score matrix for channel $c$. The weighted sum of ternary scores could be regarded as a neural parameterization of the binary scores $\mathbf{B}^{(c)}$. An illustration of the simplified CRF model is shown in Figure~\ref{fig:factor-graph-all-dep}.

\begin{figure*}
  \centering
  \includegraphics[page=4,width=.7\textwidth,trim=50 200 300 110,clip]{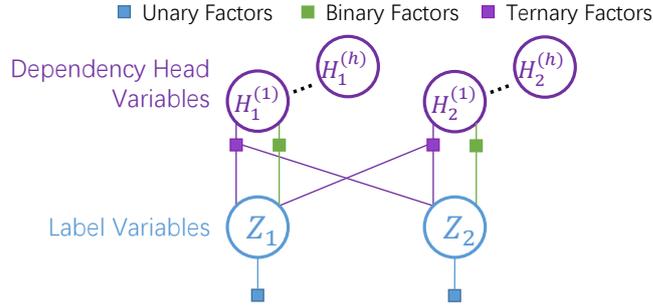}
  \caption{An equivalent factor graph for the \textit{all-dep} CRF model in Figure~\ref{fig:factor-graph-naive-all-dep}.}
  \label{fig:factor-graph-all-dep}
\end{figure*}


Given the model above, we can now derive the following iterative update equations of posterior distribution:
\begin{equation} \label{apxeq:msg-f-all-dep}
\begin{aligned}
  &\mathcal{F}_{ic}^{(t)}(j)= \\
  &\left\{
  \begin{aligned}
  \sum_{a}\sum_{b}\left(Q_i^{(t)}(a)Q_j^{(t)}(b)\mathbf{T}_{a,b}^{(c)} \right)  & , & j \le n \\
  \sum_{a}\left(Q_i^{(t)}(a)\mathbf{B}^{(c)}_{j,a}\right) & , & j > n
  \end{aligned}
  \right.  
\end{aligned}
\end{equation}
\begin{equation} \label{apxeq:msg-g-all-dep}
  \begin{aligned}
    \mathcal{G}_{i}^{(t)}(a)=&\sum_c\sum_{j \ne i, j \le n}\sum_{b}\left(Q_{ic}^{(t)}(j)Q_j^{(t)}(b)\mathbf{T}_{a,b}^{(c)}\right.\\
    &\left. + Q_{jc}^{(t)}(i)Q_j^{(t)}(b)\mathbf{T}_{b,a}^{(c)}\right) \\
    & + \sum_c\sum_{j > n} Q_{ic}^{(t)}(j)\mathbf{B}^{(c)}_{j,a}
  \end{aligned}
\end{equation}
where
\begin{align}
  Q_i^{(t)}(a)&\propto \exp\left(\mathbf{S}_{w_i, a} + \mathcal{G}_{i}^{(t-1)}(a)\right) \\
  \label{apxeq:msg-qh-all-dep} Q_{ic}^{(t)}(j)&\propto \exp\left(\mathcal{F}_{ic}^{(t-1)}(j)\right)
\end{align}

The initialization of the posterior marginal distributions $Q_i^{(t)}(\cdot)$ and $Q_{ic}^{(t)}(\cdot)$ is the same as Formula~\ref{eq:init-qz} and \ref{eq:init-qh}. Notice that $F_{ic}^{(t)} \in \mathbb{R}^{n+m}$ looks like a concatenation of a context vector and a persistent vector in all-attention networks \cite{sukhbaatar2019augmenting}.

\subsubsection{Dep-split}

Following the \textit{attn-split} setting in \citet{sukhbaatar2019augmenting}, we also design a \textit{dep-split} version of our model. In each channel, we split the head of each word into two heads: one for the head word in the sentence and one for the global feature. We call the heads for global features `global heads'.

Denote $G_i^{(c)} \in \{1, \cdot, m\}$ as the global head variable for $i$-th word in channel $c$. $H_i^{(c)} \in \{1, \cdot, n\}$ is still the variable representing the syntactic dependency head of the $i$-th word in the $c$-th channel. Similar to the approaches in the \textit{all-dep} setting, we define a simplified binary potential function for $Z_i$ and $G_i^{(c)}$
\begin{equation}
  \phi_b(G^{(c)}_i=k, Z_i=a)=\exp\left(\mathbf{B}_{k,a}^{(c)}\right)
\end{equation}
Figure~\ref{fig:factor-graph-dep-split} illustrates the CRF model of the \textit{dep-split} setting.


\begin{figure*}[t]
  \centering
  \includegraphics[page=5,width=.7\textwidth,trim=50 200 280 110,clip]{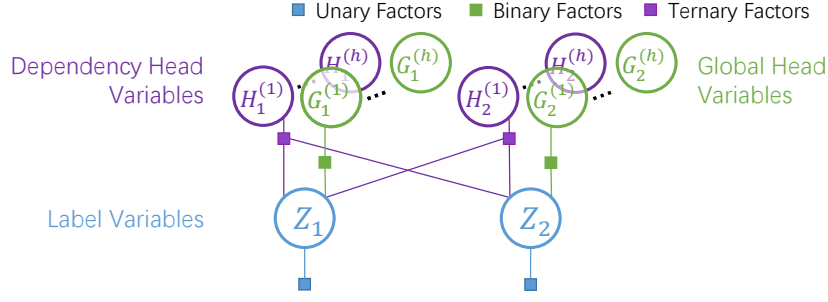}
  \caption{The factor graph for the \textit{dep-split} CRF model where $n=2$. For clarity, binary and ternary factors with channel $c>1$ are not shown in the figure.}
  \label{fig:factor-graph-dep-split}
\end{figure*}

We could derive the following iterative update equations of posterior distribution:
\begin{align}
  &\mathcal{F}_{ic}^{(t)}(j)=\sum_{a}\sum_{b}\left(Q_i^{(t)}(a)Q_j^{(t)}(b)\mathbf{T}_{a,b}^{(c)} \right) \\
  &\mathcal{H}_{i,k,c}^{(t)}=\sum_{a}\left(Q_i^{(t)}(a)\mathbf{B}^{(c)}_{k,a}\right) \\
  &\begin{aligned}
    \mathcal{G}_{i}^{(t)}(a)&=\sum_c\sum_{j \ne i}\sum_{b}Q_{ic}^{(t)}(j)Q_j^{(t)}(b)\mathbf{T}_{a,b}^{(c)} \\
    & + \sum_c\sum_{j \ne i}\sum_{b} Q_{jc}^{(t)}(i)Q_j^{(t)}(b)\mathbf{T}_{b,a}^{(c)} \\
    & + \sum_c\sum_k Q_{ic}^{'(t)}(k)\mathbf{B}^{(c)}_{k,a}
  \end{aligned}
\end{align}
where
\begin{align}
  Q_i^{(t)}(a)&\propto \exp\left(\mathbf{S}_{w_i, a} + \mathcal{G}_{i}^{(t-1)}(a)\right) \\
  Q_{ic}^{(t)}(j)&\propto \exp\left(\mathcal{F}_{ic}^{(t-1)}(j)\right) \\
  Q_{ic}^{'(t)}(k)&\propto \exp\left(\mathcal{H}_{i,k,c}^{(t-1)}\right)
\end{align}
are the approximate marginal distributions at time step $t$, with $Q_{ic}^{'(t)}(\cdot)$ over $G_i^{(c)}$. We initialize these distributions by Formula~\ref{eq:init-qz}, \ref{eq:init-qh} and
\begin{equation}
  Q_{ic}^{'(0)}(k) \propto 1
\end{equation}

\subsubsection{Single-split}

Following the \textit{single-split} setting in \citet{sukhbaatar2019augmenting}, we design a CRF model that is similar to the \textit{dep-split} model but only allows one global head for each word. We also call this setting \textit{single-split}. Denote $G_i$ as the global head variable for $i$-th word with a label set of size $m$. We define a binary potential for $Z_i$ and $G_i$
\begin{equation}
  \phi_b(G_i=k, Z_i=a)=\exp\left(\mathbf{B}_{k,a}\right)
\end{equation}
where $\mathbf{B} \in \mathbb{R}^{m \times d}$ is a score matrix. Figure~\ref{fig:factor-graph-single-split} illustrates the CRF model of the \textit{single-split} setting.

\begin{figure*}
  \centering
  \includegraphics[page=6,width=.67\textwidth,trim=50 200 320 110,clip]{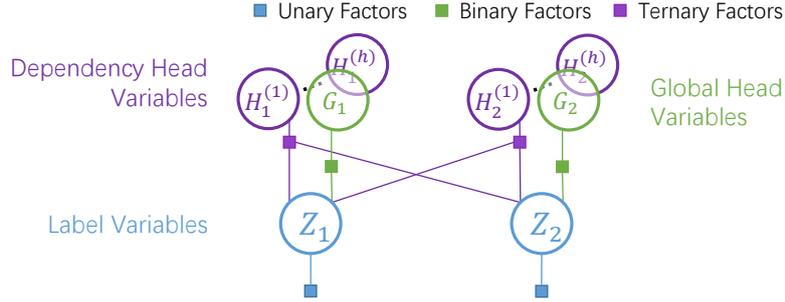}
  \caption{The factor graph for the \textit{single-split} CRF model where $n=2$. For clarity, ternary factors with channel $c>1$ are not shown in the figure.}
  \label{fig:factor-graph-single-split}
\end{figure*}

We could derive the following iterative update equations of posterior distribution:
\begin{align}
  &\mathcal{F}_{ic}^{(t)}(j)=\sum_{a}\sum_{b}\left(Q_i^{(t)}(a)Q_j^{(t)}(b)\mathbf{T}_{a,b}^{(c)} \right) \\
  &\mathcal{H}_{i,k}^{(t)}=\sum_{a}\left(Q_i^{(t)}(a)\mathbf{B}_{k,a}\right) \\
  &\begin{aligned}
    \mathcal{G}_{i}^{(t)}(a)&=\sum_c\sum_{j \ne i}\sum_{b}Q_{ic}^{(t)}(j)Q_j^{(t)}(b)\mathbf{T}_{a,b}^{(c)} \\
    & + \sum_c\sum_{j \ne i}\sum_{b} Q_{jc}^{(t)}(i)Q_j^{(t)}(b)\mathbf{T}_{b,a}^{(c)} \\
    & + \sum_k Q_{i}^{'(t)}(k)\mathbf{B}_{k,a}
  \end{aligned}
\end{align}
where
\begin{align}
  Q_i^{(t)}(a)&\propto \exp\left(\mathbf{S}_{w_i, a} + \mathcal{G}_{i}^{(t-1)}(a)\right) \\
  Q_{ic}^{(t)}(j)&\propto \exp\left(\mathcal{F}_{ic}^{(t-1)}(j)\right) \\
  Q_{i}^{'(t)}(k)&\propto \exp\left(\mathcal{H}_{i,k}^{(t-1)}\right)
\end{align}
are the approximate marginal distributions at time step $t$, with $Q_{i}^{'(t)}(\cdot)$ over $G_i$. We initialize these distributions by Formula~\ref{eq:init-qz}, \ref{eq:init-qh} and
\begin{equation}
  Q_{i}^{'(0)}(k) \propto 1
\end{equation}

\textit{single-split} might be the setting that has the most similar computation process to that of transformers. If we consider the tensorized form of \textit{single-split}, then for the posterior distributions of all the $G$ variables $Q_g^{(t)} \in \mathbb{R}^{n \times m}$, we have
\begin{align}
  &\mathcal{F}_{c}^{(t)} = Q_z^{(t)} \mathbf{T}^{(c)} Q_z^{(t)T} \\
  &\mathcal{H}^{(t)} = Q_z^{(t)} \mathbf{B}^T \\
  &\begin{aligned}
    \mathcal{G}^{(t)} =& \sum_c Q_{h,c}^{(t)} Q_z^{(t)} \mathbf{T}^{(c)T}\\
    &+ \sum_c Q_{h,c}^{(t)T} Q_z^{(t)} \mathbf{T}^{(c)}\\
    &+ Q_g^{(t)} \mathbf{B}
  \end{aligned}
\end{align}
where
\begin{align}
  Q_z^{(t)} &= \sigma\left(\mathbf{S} + \mathcal{G}^{(t-1)}\right) \\
  \label{apxeq:msg-tensor-ss-qh-novar} Q_{h,c}^{(t)} &= \sigma\left(\mathcal{F}_{c}^{(t-1)}\right) \\
  \label{apxeq:msg-tensor-ss-qg-novar} Q_{g}^{(t)} &= \sigma\left(\mathcal{H}^{(t-1)}\right)
\end{align}
With the similar trick in Section~\ref{sec:cp-to-trans}, we have
\begin{equation}
\begin{aligned}
  Q_z^{(t)} =& \sigma(\mathbf{S} + 2 \sum_c \operatorname{channel}_c \mathbf{U}^{(c)T} \\
  &+ \operatorname{GFU}(Q_z^{(t-1)}))    
\end{aligned}
\end{equation}
where 
\begin{align}
  \operatorname{channel}_c &= \sigma\left(\frac{Q_c K_c^T}{\lambda_H}\right) V_c \\
  \operatorname{GFU}(x) &= \sigma\left(x \mathbf{B}^T\right) \mathbf{B}
\end{align}
where we can regard $\operatorname{GFU}$ as an operator that updates the latent word representations from global features. An illustration of the computation process is shown in Figure~\ref{apxfig:single-split}. From Figure~\ref{apxfig:cmp-single-split-ffn}, we can see that the feed-forward structure in transformers is very similar to the global feature update process in probabilistic transformers with global variables.

\begin{figure}[t]
  \centering
  \includegraphics[page=7,width=.47\textwidth,trim=260 90 255 100,clip]{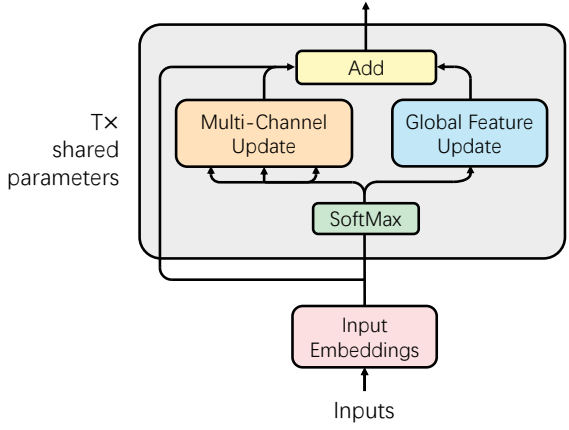}
  \caption{Computation graph for the \textit{single-split} probabilistic transformer.}
  \label{apxfig:single-split}
\end{figure}

\begin{figure}
  \centering
  \includegraphics[page=8,width=.47\textwidth,trim=310 190 200 110,clip]{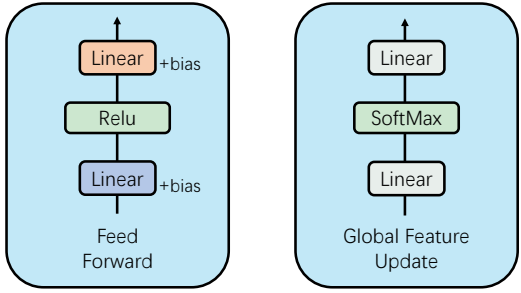}
  \caption{Computation graph for feed-forward in transformers and global feature update in probabilistic transformers with global variables.}
  \label{apxfig:cmp-single-split-ffn}
\end{figure}

\section{Distance and Relative Positional Encoding (RPE)}
\label{apx:rpe}
In Section~\ref{sec:single-channel}, we find that the single-channel update (Equation~\ref{eq:channel-c}) in probabilistic transformers is almost identical to scaled dot-product attention in transformers. This observation is based on the hypothesis that probabilistic transformers and transformers are sharing the same positional encoding method. But this is not the case.

In section~\ref{sec:distance}, we mention that to capture the word order information, we use a clip function to select the ternary potential function based on the distance of two words (Equation~\ref{eq:distance}). This is similar to the relative positional encoding (RPE) in transformers. \citet{shaw-etal-2018-self} proposes a method to add an additional component to key and value, based on the clipped distance. Specifically, the scaled dot-product attention with RPE could be rewritten as
\begin{equation*}
     e_{ij} = \frac{x_{i}W^{Q}\left(x_{j}W^{K} + a^{K}_{ij}\right)^{T}}{\sqrt{d_{k}}} 
\end{equation*}
\begin{equation*}
     z_{i} = \sum^{n}_{j=1}\alpha_{ij}\left(x_{j}W^{V} + a_{ij}^{V}\right)
\end{equation*}
where $x_i$ is the input representation of the $i$-th word, $z_i$ is the output representation, $\alpha_{ij} = \frac{\exp e_{ij}}{\sum_k \exp e_{ik}}$. The additional component is a learnable parameter that based on the clipped distance
\begin{align*}
a^K_{ij} &= w^K_{\mathrm{clip}(j - i, k)} \\
a^V_{ij} &= w^V_{\mathrm{clip}(j - i, k)} \\
\mathrm{clip}(x, k) &= \max(-k, \min(k, x))
\end{align*}

For probabilistic transformers, we directly add the distance information to the ternary potential function. Combining Equation~\ref{eq:distance} and \ref{eq:channel-c}, we could rewrite the single-channel update as
\begin{equation*}
     e_{ij} = \frac{x_{i}\mathbf{U}_{ij}\left(x_{j}\mathbf{V}_{ij}\right)^{T}}{\lambda_H} 
\end{equation*}
\begin{equation*}
     z_{i} = \sum^{n}_{j=1}\alpha_{ij}\left(x_{j}\mathbf{V}_{ij}\right)
\end{equation*}
where $\alpha_{ij} = \frac{\exp e_{ij}}{\sum_k \exp e_{ik}}$. The weights are based on the clip function $f$ in Equation~\ref{eq:clip}
\begin{align*}
\mathbf{U}_{ij} &= \mathbf{U}[f(i - j)] \\
\mathbf{V}_{ij} &= \mathbf{V}[f(i - j)]
\end{align*}

Notice that this way of positional encoding is quite parameter inefficient. It also makes our training process much slower than that of transformers.

\section{Details for Tasks and Datasets}
\label{apx:task-dataset}

In this section, we will introduce our tasks and datasets in detail. A brief introduction is shown in Section~\ref{sec:task-dataset}.

\subsection{Masked Language Modeling}
\label{apx:exp-mlm}

Masked Language Modeling (MLM) tasks generally evaluate the expressiveness of contextural word representations. We perform MLM tasks on two corpora: the Penn TreeBank (PTB) and Brown Laboratory for Linguistic Information Processing (BLLIP). We randomly replace words with a mask token \texttt{<mask>} at a rate of 30\% and the model is required to predict the original word. Following \citet{shen-etal-2022-unsupervised}, we never mask \texttt{<unk>} tokens. The performance of MLM is evaluated by measuring perplexity (lower is better) on masked words.

\textbf{PTB}. The Penn Treebank \cite{marcinkiewicz1994building}, in particular the sections of the corpus corresponding to the articles of Wall Street Journal (WSJ), is a standard dataset for language modeling \cite{mikolov2012statistical} and sequence labeling \cite{dinarelli2019seq2biseq}. Following the setting in \citet{shen-etal-2021-structformer}, we use the preprocessing method proposed in \citet{mikolov2012statistical}. It removes all punctuation and replaces low-frequency words with \texttt{<unk>}. The processed dataset has a vocabulary size of 10000, including \texttt{<unk>} and \texttt{<mask>}.

\textbf{BLLIP}. The Brown Laboratory for Linguistic Information Processing dataset \cite{charniak2000bllip} is a large corpus similar to the PTB dataset in style. The entire dataset contains 24 million sentences from Wall Street Journal. In our experiments, we only use a small subset of this corpus. Following the same setting as \citet{shen-etal-2022-unsupervised}, we use the BLLIP-XS split proposed in \citet{hu-etal-2020-systematic} with around 40k sentences and 1M tokens as the train set. The validation set consists of the first section each year and the test set consists of the second section each year. We remove all punctuation, replace numbers with a single character \texttt{N} and use lower-case letters. The vocabulary contains words
that appear more than 27 times in the entire BLLIP dataset, with size 30231 including \texttt{<unk>} and \texttt{<mask>}.

\subsection{Sequence Labeling}

Sequence labeling tasks require models to predict the tag for each word in the sequence. For sequence labeling tasks, we perform part-of-speech (POS) tagging on two datasets: the Penn TreeBank (PTB) and the Universal Dependencies (UD). We also perform named entity recognition (NER) on CoNLL-2003.

\textbf{PTB}. As introduced in Appendix~\ref{apx:exp-mlm}, we also use the PTB dataset for POS tagging but with a different setting. We use the most commons split of this corpus for POS tagging, where sections from 0 to 18 are used as the train set, sections from 19 to 21 are used as the validation set, and sections from 22 to 24 are used as the test set. All words in the train set compose the vocabulary.

\textbf{UD}. UD is a project that develops cross-linguistically consistent treebank annotation for many languages \cite{de2021universal}. We test our model on the language-specific part-of-speech (XPOS) tags of the English EWT dataset with the standard splits. All words in the train set compose the vocabulary.

\textbf{CoNLL-2003}. It is a named entity recognition dataset which is released as part of CoNLL-2003 shared task \cite{tjong-kim-sang-de-meulder-2003-introduction}. We test our model on the English dataset. All words in the train set compose the vocabulary. We only project the final word representation of each word to the tag set with the BIOES scheme without using a CRF decoder.

\subsection{Text Classification}

Text Classification tasks need to classify sentences into different classes. We use the Stanford Sentiment Treebank (SST) \cite{socher-etal-2013-recursive} as the dataset. It has two variants: binary classification (SST-2) and fine-grained classification (SST-5). The dataset comes from SentEval \cite{conneau2018senteval}.

\textbf{SST-2}. SST-2 classifies each movie review into positive or negative classes. It contains 67k sentences in the train set.

\textbf{SST-5}. SST-5 classifies sentences into 5 classes: negative, somewhat negative, neutral, somewhat positive and positive. It contains 8.5k sentences in the train set.

In text classification, all words in the train set compose the vocabulary.

\subsection{Syntactic Test}

To evaluate the compositional generalization abilities of our model, we perform a syntactic test on the COGS \cite{kim-linzen-2020-cogs} dataset. COGS is a semantic parsing dataset that measures the compositional generalization abilities of models. We follow the settings in \citet{ontanon2021making}, which turns the task from seq2seq into a sequence tagging task. The model needs to predict 5 tags for each input word: a \textit{parent} word, the \textit{role} of the relation between the word and its parent (if applicable), the \textit{category}, the \textit{noun determiner} (for nouns) and the \textit{verb name} (for verbs). With these tags, one can reconstruct the original output deterministically.

For \textit{role}, \textit{category}, \textit{noun determiner} and \textit{verb name}, we directly project word representations to each tag set. For the \textit{parent} tag, \cite{ontanon2021making} propose 3 types of prediction heads:

\begin{itemize}
  \item \textit{Absolute} uses a direct projection to predict the absolute index of the parent in the input sequence (-1 for no parent).
  \item \textit{Relative} uses a direct projection to predict the relative offset of the parent token with respect to the current token, or self for no parent.
  \item \textit{Attention} uses the attention weights from a new attention layer with a single head to predict the parent.
\end{itemize}

We empirically find that \textit{relative} performs the best in most settings for both transformers and probabilistic transformers. This is not consistent with the observations in \citet{ontanon2021making} who finds that \textit{attention} outperforms other settings. We still apply the \textit{relative} setting in our experiments.

\section{Hyperparameters and Implementation}

We report our hyperparameters in Table~\ref{tab:hyper-pt} for probabilistic transformers and Table~\ref{tab:hyper-tsfm} for transformers. We tune the models for each task except the syntactic test through random search. We run experiments on one NVIDIA GeForce RTX 2080 Ti and all the experiments could finish in one day. Our implementation is based on the flair framework \cite{akbik-etal-2019-flair}.

\begin{table*}[]
  \centering
  \begin{tabular}{@{}lccccccc@{}}
  \toprule
  \multicolumn{1}{c}{\multirow{2}{*}{\textbf{Probabilistic Transformer}}} & \multicolumn{2}{c}{MLM} & \multicolumn{2}{c}{POS} & \multicolumn{2}{c}{CLS} & SYN    \\
  \multicolumn{1}{c}{}                                                    & PTB        & BLLIP      & PTB        & UD         & SST-2       & SST-5     & COGS   \\ \midrule
  Label set size $d$                                                      & 384        & 384        & 128        & 128        & 512         & 256       & 64     \\
  Root label set size $d_{root}$                                          & --         & --         & --         & --         & 1024        & 512       & --     \\
  \# of channels $h$                                                      & 16         & 16         & 12         & 18         & 10          & 18        & 4      \\
  \# of iterations $T$                                                    & 5          & 5          & 3          & 2          & 1           & 4         & 2      \\
  Distance threshold $\gamma$                                             & 3          & 3          & 3          & 3          & 3           & 3         & 8      \\
  Decomposition                                                           & UV         & UV         & UV         & --         & UV          & UVW       & UV     \\
  Decomposition rank $r$                                                  & 64         & 64         & 128        & --         & 64          & 64        & 16     \\
  Dropout                                                                 & 0.15       & 0.15       & 0.05       & 0.1        & 0.1         & 0.05      & 0.1    \\
  Asynchronous update                                                     & \multicolumn{7}{c}{Yes}                                                              \\
  Learning rate                                                           & 0.001      & 0.001      & 0.0024     & 0.0062     & 0.0001      & 0.0002    & 0.0025 \\
  Weight decay                                                     & 1.4e-6     & 1.4e-6     & 8e-6       & 2.2e-6     & 3e-7        & 3e-7      & 1e-9   \\
  L2 reg for $\mathbf{T}$                                                 & 5e-4       & 5e-4       & 0          & 4e-4       & 0           & 0         & 0      \\ \bottomrule
  \end{tabular}
  \caption{Hyperparameters for probabilistic transformers in our experiments.}
  \label{tab:hyper-pt}
\end{table*}

\begin{table*}[]
  \centering
  \begin{tabular}{@{}lccccccc@{}}
  \toprule
  \multicolumn{1}{c}{\multirow{2}{*}{\textbf{Transformer}}} & \multicolumn{2}{c}{MLM} & \multicolumn{2}{c}{POS} & \multicolumn{2}{c}{CLS} & SYN    \\
  \multicolumn{1}{c}{}                                      & PTB        & BLLIP      & PTB         & UD        & SST-2       & SST-5     & COGS   \\ \midrule
  Embedding size $d_{model}$                                & 384        & 256        & 512         & 384       & 256         & 128       & 64     \\
  FFN inner layer size $d_{ff}$                             & 2048       & 2048       & 2048        & 512       & 512         & 1024      & 256    \\
  \# of heads $h$                                           & 8          & 14         & 14          & 14        & 10          & 14        & 4      \\
  \# of layers $N$                                          & 5          & 4          & 5           & 4         & 8           & 4         & 2      \\
  Positional Encoding                                       & abs        & abs        & abs         & abs       & abs         & abs       & rel-8  \\
  Head dimension $d_{qkv}$                                  & 256        & 128        & 32          & 16        & 256         & 256       & 16     \\
  Dropout                                                   & 0.15       & 0.15       & 0.15        & 0         & 0.05        & 0         & 0.1    \\
  Learning rate                                             & 0.0001     & 0.0002     & 0.0004      & 0.0004    & 0.0001      & 0.0002    & 0.0005 \\
  Weight decay                                       & 1.2e-6     & 3.5e-6     & 3.2e-6      & 1.4e-6    & 1.9e-6      & 2.7e-6    & 1e-9   \\ \bottomrule
  \end{tabular}
  \caption{Hyperparameters for transformers in our experiments.}
  \label{tab:hyper-tsfm}
\end{table*}

\section{Case Studies of Learned Dependency Structures}
\label{apx:case-study}

A probabilistic transformer infers marginal distributions over both $Z$ and $H$ variables, the latter of which can be used to extract a dependency structure. Since our model is trained on downstream tasks such as MLM without access to gold parse trees, it can be seen as performing unsupervised dependency parsing.
We visualize the dependency structures learned by a probabilistic transformer by looking at the most probable head of each word in the sentence.

Figure~\ref{apxfig:example-dep} illustrates the dependency structures extracted from a probabilistic transformer trained on the PTB dataset under the MLM task. The sentence comes from the test set of the PTB dataset. We show the head of each word in all the channels. The numbers on the dependency arcs represent probabilities estimated by the model. The model does not contain a root node, so there is at least one circle in the dependency graph.

From the figure, we can see that our model is very confident in its choices of dependency arcs, with all the probabilities close to 1, which indicates strong compatibilities between the latent representations of connected word pairs. The predicted structure somewhat makes sense. For example, it puts `she said' together. But generally, most of the dependency arcs are not consistent with human-designed dependency relations.




\begin{figure*}
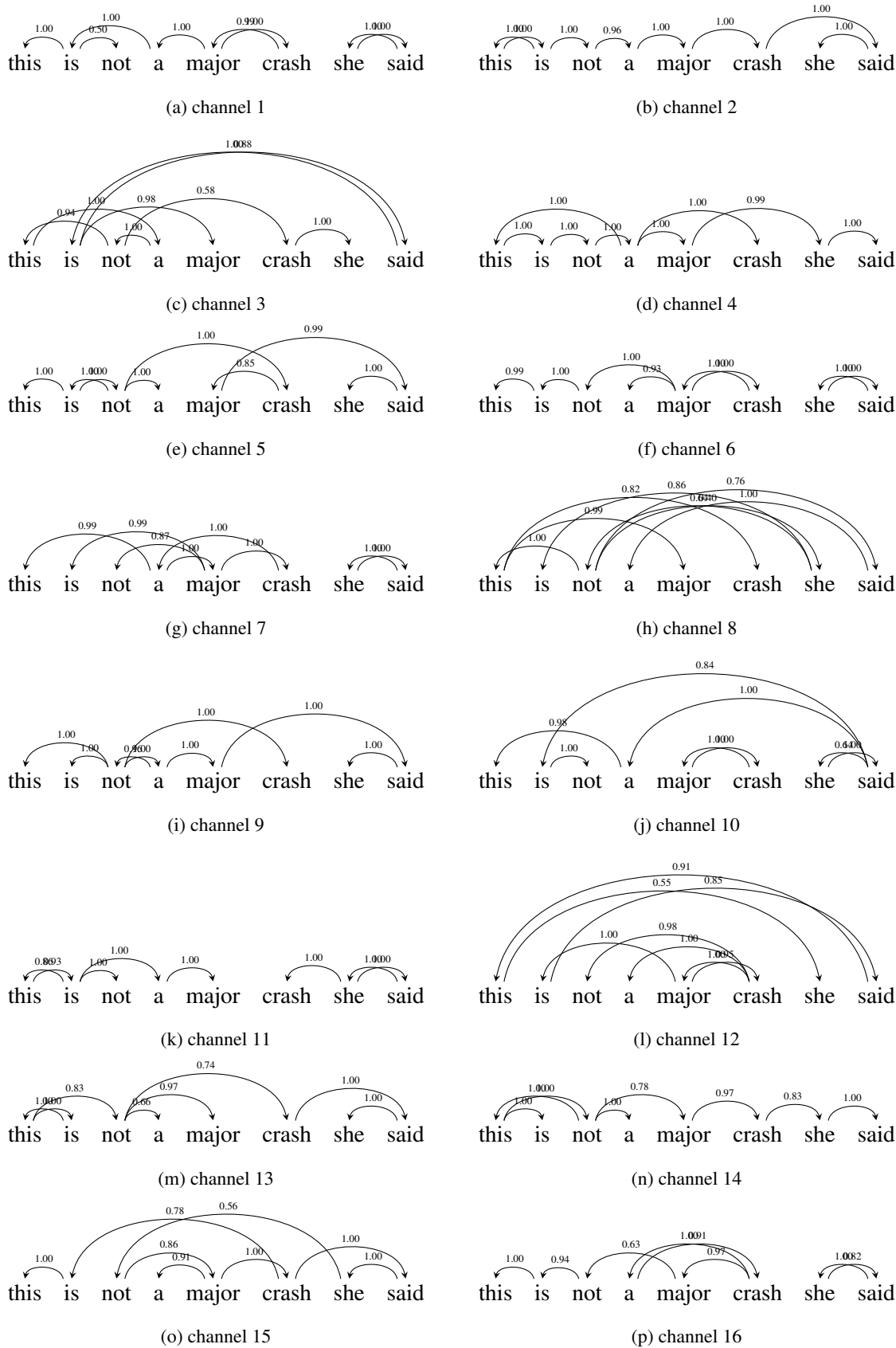

  \centering
  \begin{subfigure}{.48\textwidth}
    \centering
    \begin{dependency}[theme = simple]
      \begin{deptext}[column sep=.5em]
          this \& is \& not \& a \& major \& crash \& she \& said \\
      \end{deptext}
      \depedge{2}{1}{1.00}
      \depedge{4}{2}{1.00}
      \depedge{2}{3}{0.50}
      \depedge{5}{4}{1.00}
      \depedge{6}{5}{0.99}
      \depedge{5}{6}{1.00}
      \depedge{8}{7}{1.00}
      \depedge{7}{8}{1.00}
    \end{dependency}
    \caption{channel 1}
  \end{subfigure}
  \begin{subfigure}{.48\textwidth}
    \centering
    \begin{dependency}[theme = simple]
      \begin{deptext}[column sep=.5em]
          this \& is \& not \& a \& major \& crash \& she \& said \\
      \end{deptext}
      \depedge{2}{1}{1.00}
      \depedge{1}{2}{1.00}
      \depedge{2}{3}{1.00}
      \depedge{3}{4}{0.96}
      \depedge{4}{5}{1.00}
      \depedge{5}{6}{1.00}
      \depedge{8}{7}{1.00}
      \depedge{6}{8}{1.00}
    \end{dependency}
    \caption{channel 2}
  \end{subfigure}
  \begin{subfigure}{.48\textwidth}
    \centering
    \begin{dependency}[theme = simple]
      \begin{deptext}[column sep=.5em]
          this \& is \& not \& a \& major \& crash \& she \& said \\
      \end{deptext}
      \depedge{3}{1}{0.94}
      \depedge{8}{2}{1.00}
      \depedge{4}{3}{1.00}
      \depedge{1}{4}{1.00}
      \depedge{2}{5}{0.98}
      \depedge{3}{6}{0.58}
      \depedge{6}{7}{1.00}
      \depedge{2}{8}{0.88}
    \end{dependency}
    \caption{channel 3}
  \end{subfigure}
  \begin{subfigure}{.48\textwidth}
    \centering
    \begin{dependency}[theme = simple]
      \begin{deptext}[column sep=.5em]
          this \& is \& not \& a \& major \& crash \& she \& said \\
      \end{deptext}
      \depedge{4}{1}{1.00}
      \depedge{1}{2}{1.00}
      \depedge{2}{3}{1.00}
      \depedge{3}{4}{1.00}
      \depedge{4}{5}{1.00}
      \depedge{4}{6}{1.00}
      \depedge{5}{7}{0.99}
      \depedge{7}{8}{1.00}
    \end{dependency}
    \caption{channel 4}
  \end{subfigure}
  \begin{subfigure}{.48\textwidth}
    \centering
    \begin{dependency}[theme = simple]
      \begin{deptext}[column sep=.5em]
          this \& is \& not \& a \& major \& crash \& she \& said \\
      \end{deptext}
      \depedge{2}{1}{1.00}
      \depedge{3}{2}{1.00}
      \depedge{2}{3}{1.00}
      \depedge{3}{4}{1.00}
      \depedge{6}{5}{0.85}
      \depedge{3}{6}{1.00}
      \depedge{8}{7}{1.00}
      \depedge{5}{8}{0.99}
    \end{dependency}
    \caption{channel 5}
  \end{subfigure}
  \begin{subfigure}{.48\textwidth}
    \centering
    \begin{dependency}[theme = simple]
      \begin{deptext}[column sep=.5em]
          this \& is \& not \& a \& major \& crash \& she \& said \\
      \end{deptext}
      \depedge{2}{1}{0.99}
      \depedge{3}{2}{1.00}
      \depedge{5}{3}{1.00}
      \depedge{5}{4}{0.93}
      \depedge{6}{5}{1.00}
      \depedge{5}{6}{1.00}
      \depedge{8}{7}{1.00}
      \depedge{7}{8}{1.00}
    \end{dependency}
    \caption{channel 6}
  \end{subfigure}
  \begin{subfigure}{.48\textwidth}
    \centering
    \begin{dependency}[theme = simple]
      \begin{deptext}[column sep=.5em]
          this \& is \& not \& a \& major \& crash \& she \& said \\
      \end{deptext}
      \depedge{4}{1}{0.99}
      \depedge{5}{2}{0.99}
      \depedge{5}{3}{0.87}
      \depedge{6}{4}{1.00}
      \depedge{4}{5}{1.00}
      \depedge{5}{6}{1.00}
      \depedge{8}{7}{1.00}
      \depedge{7}{8}{1.00}
    \end{dependency}
    \caption{channel 7}
  \end{subfigure}
  \begin{subfigure}{.48\textwidth}
    \centering
    \begin{dependency}[theme = simple]
      \begin{deptext}[column sep=.5em]
          this \& is \& not \& a \& major \& crash \& she \& said \\
      \end{deptext}
      \depedge{3}{1}{1.00}
      \depedge{7}{2}{0.86}
      \depedge{7}{3}{0.64}
      \depedge{8}{4}{1.00}
      \depedge{1}{5}{0.99}
      \depedge{1}{6}{0.82}
      \depedge{3}{7}{0.40}
      \depedge{3}{8}{0.76}
    \end{dependency}
    \caption{channel 8}
  \end{subfigure}
  \begin{subfigure}{.48\textwidth}
    \centering
    \begin{dependency}[theme = simple]
      \begin{deptext}[column sep=.5em]
          this \& is \& not \& a \& major \& crash \& she \& said \\
      \end{deptext}
      \depedge{3}{1}{1.00}
      \depedge{3}{2}{1.00}
      \depedge{4}{3}{0.96}
      \depedge{3}{4}{1.00}
      \depedge{4}{5}{1.00}
      \depedge{3}{6}{1.00}
      \depedge{8}{7}{1.00}
      \depedge{5}{8}{1.00}
    \end{dependency}
    \caption{channel 9}
  \end{subfigure}
  \begin{subfigure}{.48\textwidth}
    \centering
    \begin{dependency}[theme = simple]
      \begin{deptext}[column sep=.5em]
          this \& is \& not \& a \& major \& crash \& she \& said \\
      \end{deptext}
      \depedge{4}{1}{0.98}
      \depedge{8}{2}{0.84}
      \depedge{2}{3}{1.00}
      \depedge{8}{4}{1.00}
      \depedge{6}{5}{1.00}
      \depedge{5}{6}{1.00}
      \depedge{8}{7}{0.64}
      \depedge{7}{8}{1.00}
    \end{dependency}
    \caption{channel 10}
  \end{subfigure}
  \begin{subfigure}{.48\textwidth}
    \centering
    \begin{dependency}[theme = simple]
      \begin{deptext}[column sep=.5em]
          this \& is \& not \& a \& major \& crash \& she \& said \\
      \end{deptext}
      \depedge{2}{1}{0.86}
      \depedge{1}{2}{0.93}
      \depedge{2}{3}{1.00}
      \depedge{2}{4}{1.00}
      \depedge{4}{5}{1.00}
      \depedge{7}{6}{1.00}
      \depedge{8}{7}{1.00}
      \depedge{7}{8}{1.00}
    \end{dependency}
    \caption{channel 11}
  \end{subfigure}
  \begin{subfigure}{.48\textwidth}
    \centering
    \begin{dependency}[theme = simple]
      \begin{deptext}[column sep=.5em]
          this \& is \& not \& a \& major \& crash \& she \& said \\
      \end{deptext}
      \depedge{8}{1}{0.91}
      \depedge{5}{2}{1.00}
      \depedge{6}{3}{0.98}
      \depedge{6}{4}{1.00}
      \depedge{6}{5}{1.00}
      \depedge{5}{6}{0.95}
      \depedge{1}{7}{0.55}
      \depedge{2}{8}{0.85}
    \end{dependency}
    \caption{channel 12}
  \end{subfigure}
  \begin{subfigure}{.48\textwidth}
    \centering
    \begin{dependency}[theme = simple]
      \begin{deptext}[column sep=.5em]
          this \& is \& not \& a \& major \& crash \& she \& said \\
      \end{deptext}
      \depedge{2}{1}{1.00}
      \depedge{1}{2}{1.00}
      \depedge{1}{3}{0.83}
      \depedge{3}{4}{0.66}
      \depedge{3}{5}{0.97}
      \depedge{3}{6}{0.74}
      \depedge{8}{7}{1.00}
      \depedge{6}{8}{1.00}
    \end{dependency}
    \caption{channel 13}
  \end{subfigure}
  \begin{subfigure}{.48\textwidth}
    \centering
    \begin{dependency}[theme = simple]
      \begin{deptext}[column sep=.5em]
          this \& is \& not \& a \& major \& crash \& she \& said \\
      \end{deptext}
      \depedge{3}{1}{1.00}
      \depedge{1}{2}{1.00}
      \depedge{1}{3}{1.00}
      \depedge{3}{4}{1.00}
      \depedge{3}{5}{0.78}
      \depedge{5}{6}{0.97}
      \depedge{6}{7}{0.83}
      \depedge{7}{8}{1.00}
    \end{dependency}
    \caption{channel 14}
  \end{subfigure}
  \begin{subfigure}{.48\textwidth}
    \centering
    \begin{dependency}[theme = simple]
      \begin{deptext}[column sep=.5em]
          this \& is \& not \& a \& major \& crash \& she \& said \\
      \end{deptext}
      \depedge{2}{1}{1.00}
      \depedge{6}{2}{0.78}
      \depedge{7}{3}{0.56}
      \depedge{5}{4}{0.91}
      \depedge{3}{5}{0.86}
      \depedge{5}{6}{1.00}
      \depedge{8}{7}{1.00}
      \depedge{6}{8}{1.00}
    \end{dependency}
    \caption{channel 15}
  \end{subfigure}
  \begin{subfigure}{.48\textwidth}
    \centering
    \begin{dependency}[theme = simple]
      \begin{deptext}[column sep=.5em]
          this \& is \& not \& a \& major \& crash \& she \& said \\
      \end{deptext}
      \depedge{2}{1}{1.00}
      \depedge{3}{2}{0.94}
      \depedge{5}{3}{0.63}
      \depedge{6}{4}{1.00}
      \depedge{6}{5}{0.97}
      \depedge{4}{6}{0.91}
      \depedge{8}{7}{1.00}
      \depedge{7}{8}{0.82}
    \end{dependency}
    \caption{channel 16}
  \end{subfigure}
  \caption{Dependency structures learned by a probabilistic transformer under the MLM task. The numbers on the dependency arcs represent the confidence of the head word.}
  \label{apxfig:example-dep}
\end{figure*}

\end{document}